\newcommand{\subfiglabel}[1]{%
    \captionsetup{labelformat=empty}%
    \caption{Figure \thefigure\alph{subfigure}: #1}%
}
\theoremstyle{plain}
\newtheorem{theorem}{Theorem}[section]
\newtheorem{lemma}[theorem]{Lemma}
\theoremstyle{definition}
\newtheorem{definition}[theorem]{Definition}
\theoremstyle{remark}
\newtheorem{remark}[theorem]{Remark}
\def\eqref#1{equation~\ref{#1}}
\def\Eqref#1{Equation~\ref{#1}}
\DeclareMathAlphabet{\mathsfit}{\encodingdefault}{\sfdefault}{m}{sl}
\SetMathAlphabet{\mathsfit}{bold}{\encodingdefault}{\sfdefault}{bx}{n}
\newcommand{\R}{\mathbb{R}}
\newcommand{\expectation}[2]{\mathbb{E}_{#1}\left[#2\right]}
\newcommand{\Conditional}[2]{\left. #1\ \right\rvert\ #2}
\newcommand{\conditionalexpectation}[3]{\expectation{#1}{\Conditional{#2}{#3}}}
\newcommand{\characteristic}[1]{\chi_{#1}}
\newcommand{\law}[1]{\mathrm{Law}(#1)}
\newcommand{\pushforward}[2]{#1_\sharp #2}
\newcommand{\boundedfn}[1]{\mathrm{B}(#1)}
\newcommand{\probset}[1]{\mathcal{P}\left(#1\right)}
\newcommand\usequantumnotation{true}
\newcommand{\measurement}[3]{
  \ifthenelse{ \equal{\usequantumnotation}{true} }{
    \left\langle #1\left\lvert #2\right\rvert #3\right\rangle
  }{
    #1^\top #2 #3
  }
}
\DeclareMathAlphabet{\mathmybb}{U}{bbold}{m}{n}
\NewDocumentCommand{\indicator}{m}{%
  \mathop{}\!\boldsymbol{\mathmybb{1}}
  \mathord{
    \{
  }
  #1
  \mathord{
    \}
  }
}
\DeclareMathOperator{\identity}{\mathsf{id}}
\DeclareMathOperator{\eqlaw}{\overset{\mathcal{L}}{=}}
\newcommand{\AbsObject}{Distributional successor measure\xspace}
\newcommand{\absobject}{distributional successor measure\xspace}
\newcommand{\AbsObj}{Distributional SM\xspace}
\newcommand{\absobj}{distributional SM\xspace}
\newcommand{\AbsObjs}{Distributional SMs\xspace}
\newcommand{\absobjs}{distributional SMs\xspace}
\newcommand{\AbsObjAcronym}{DSM\xspace}
\newcommand{\algobject}{$\delta$-model\xspace}
\newcommand{\algobj}{\algobject}
\newcommand{\AlgObjects}{$\delta$-models\xspace}
\newcommand{\algobjects}{$\delta$-models\xspace}
\newcommand{\AlgObjs}{\AlgObjects}
\newcommand{\algobjs}{\algobjects}
\renewcommand{\probset}[1]{\mathscr{P}(#1)}
\newcommand{\randret}[2][]{G_{#1}^{#2}}
\newcommand{\randocc}[1][\pi]{M^{#1}}
\newcommand{\dsr}[1][\pi]{\daleth^{#1}}
\newcommand{\occspace}[1][\mathcal{X}]{\probset{#1}}
\newcommand{\mmd}[1][\kappa]{\ensuremath{\mathrm{MMD}_{#1}}}
\newcommand{\mmdsamples}[1][\kappa]{\ensuremath{\widehat{\mathrm{MMD}}_{#1}}}
\newcommand{\innerkernel}{\ensuremath{\kappa}}
\newcommand{\outerkernel}[2][\innerkernel]{\ensuremath{k_{#1}^{#2}}}
\newcommand{\bootpush}[2][\gamma]{\ensuremath{\mathrm{b}_{#2,#1}}}
\newcommand{\defeq}{\vcentcolon=}
\newcommand{\statetransitionkernel}{\ensuremath{p^\pi}}
\begin{document}

\twocolumn[
\icmltitle{A Distributional Analogue to the Successor Representation}

\icmlsetsymbol{equal}{*}

\begin{icmlauthorlist}
\icmlauthor{Harley Wiltzer}{equal,mcgill,mila}
\icmlauthor{Jesse Farebrother}{equal,mcgill,mila,gdm}
\icmlauthor{Arthur Gretton}{gdm,ucl}
\icmlauthor{Yunhao Tang}{gdm}
\icmlauthor{Andr\'e Barreto}{gdm}
\icmlauthor{Will Dabney}{gdm}
\icmlauthor{Marc G. Bellemare}{mcgill,mila,cifar}
\icmlauthor{Mark Rowland}{gdm}
\end{icmlauthorlist}

\icmlaffiliation{mcgill}{McGill University}
\icmlaffiliation{mila}{Mila - Qu\'ebec AI Institute}
\icmlaffiliation{gdm}{Google DeepMind}
\icmlaffiliation{ucl}{Gatsby Unit, University College London}
\icmlaffiliation{cifar}{CIFAR AI Chair}

\icmlcorrespondingauthor{Harley Wiltzer}{harley.wiltzer@mail.mcgill.ca}
\icmlcorrespondingauthor{Jesse Farebrother}{jfarebro@cs.mcgill.ca}

\icmlkeywords{Machine Learning, ICML}

\vskip 0.3in
]

\printAffiliationsAndNotice{\icmlEqualContribution} %

\begin{abstract}
This paper contributes a new approach for distributional reinforcement learning which 
elucidates a clean separation of transition structure and reward in the learning process. Analogous to how the successor representation (SR) describes the expected consequences of behaving according to a given policy, our distributional successor measure (SM) describes the distributional consequences of this behaviour.
We formulate the distributional SM as a distribution over distributions and provide theory connecting it with distributional and model-based reinforcement learning.
Moreover, we propose an algorithm that learns the \absobj from data by minimizing a two-level maximum mean discrepancy. Key to our method are a number of algorithmic techniques that are independently valuable for learning generative models of state.
As an illustration of the usefulness of the \absobj, we show that it
enables zero-shot risk-sensitive policy evaluation in a way that was not previously possible.
\end{abstract}

\section{Introduction}\label{sec:introduction}

Distributional reinforcement learning \citep{morimura2010nonparametric, Bellemare2017ADP, bdr2022} is an approach to reinforcement learning (RL) that focuses on learning the entire probability distribution of an agent's return, not just its expected value. Distributional RL has been shown to improve deep RL agent performance \citep{yang2019fully,nguyen2020distributional}, and provides a flexible approach to risk-aware decision-making \citep{dabney2018implicit,zhang2021safe,fawzi2022discovering}.
A notable drawback of existing approaches to distributional RL is that rewards must be 
available at training time in order to predict the return distribution. 
For example, if we wish to evaluate a trained policy on a new task with regard
to various performance criteria,
these 
predictions of the return distributions must be trained from scratch.
This paper contributes a method that overcomes this drawback, allowing for
\emph{zero-shot evaluation} of novel reward functions without requiring further 
learning.

In the case of predicting just the \emph{expected} return, such zero-shot evaluation is made possible by learning the successor representation \citep[SR;][]{dayan1993improving}. This approach has recently been extended to continuous state spaces \citep{blier2021learning, blier2022some}, and the introduction of a variety of density modelling and generative modelling techniques mean that such zero-shot transfer is now possible at scale \citep{janner2020gamma,touati2021learning,touati2023does}.

This paper extends the idea of the successor representation to distributional RL, by defining the \emph{distributional successor measure} (DSM). We show that the DSM is a reward-agnostic object that can by combined with any deterministic reward function to obtain the corresponding distribution of returns, extending zero-shot transfer to the entire distribution of returns.
Our primary algorithmic contribution is the $\delta$-model, a tractable approximation to the distributional successor measure based on ensembles of diverse generative models, along with practical implementation techniques that are crucial for success.
We exhibit the power of \algobjs by demonstrating their unique ability
generalize across tasks and \emph{risk-sensitive} criteria
without necessitating any further data collection or training, which could be expensive or dangerous -- a feat that no other method can accomplish.

\section{Background}\label{sec:background}

In the sequel, 
$\law{X}$ denotes the probability measure governing a random variable $X$, and $X\eqlaw Y$ (read \emph{equal in distribution})
is written to indicate that $\law{X}=\law{Y}$. The notation $\probset{A}$ defines the space of probability
measures over a set $A$. We also write $(X,Y)\sim\mu\otimes\nu$ to refer to the pair of independent samples $X\sim\mu,\, Y\sim\nu$.

We consider a Markov decision process (MDP) with state space $\mathcal{X}$, finite action space 
$\mathcal{A}$, transition kernel $p : \mathcal{X}\times\mathcal{A} \to \probset{\mathcal{X}}$, 
bounded and measurable reward function $r : \mathcal{X}\to\R$, and discount factor $\gamma \in [0,1)$. We
assume henceforth that $\mathcal{X}$ is a complete and separable metric space, which allows for finite state spaces, as well as many continuous state spaces of interest.
Given a policy $\pi : \mathcal{X} \to 
\probset{\mathcal{A}}$ and initial state $x_0 \in \mathcal{X}$ drawn from $\mu_0\in\probset{\mathcal{X}}$, an agent generates a random 
trajectory $(X_t, A_t, R_t)_{t = 0}^\infty$ of states, actions, and rewards, with distributions 
specified by $X_0 = x$, $A_t \sim \pi(\cdot|X_t)$, $R_t = r(X_t)$, and $X_{t+1} \sim 
p(\cdot|X_t, A_t)$ for all $t \geq 0$. For a fixed policy $\pi$, we will denote the transition 
kernel governing state evolution by $\statetransitionkernel$, where
$\statetransitionkernel(\cdot\mid x) = \sum_{a\in \mathcal{A}}p(\cdot\mid x, a)\pi(a\mid x)$.

The (random) return summarises the performance of the agent along its trajectory, and
for each possible initial state
$X_0=x\in\mathcal{X}$, it is defined as $\randret[r]{\pi}(x)\defeq\sum_{t=0}^\infty
\gamma^tr(X_t)$.
When there is no
ambiguity about the reward function, we will write $\randret{\pi}$ in
place of $\randret[r]{\pi}$.
For a given policy $\pi$, the problem of \emph{policy evaluation} is to find the expected return for each initial state. Mathematically, this can be expressed as learning the function $V^\pi_r : \mathcal{X} \rightarrow \mathbb{R}$, defined by
$V_r^\pi(x) \defeq \expectation{}{\randret[r]{\pi}(x)}$,
this describes the quality of $\pi$ in its own right, and may also be used to obtain 
\emph{improved} policies, for example by acting greedily  
\citep{puterman2014markov}.

\subsection{Successor Measure}

The \emph{normalized successor measure} $\Psi^\pi:\mathcal{X}\to\probset{\mathcal{X}}$ associated with a policy $\pi$
is defined by
\begin{equation}\label{eq:sr}
\Psi^\pi(S\mid x) \defeq \sum_{t=0}^\infty (1-\gamma) \gamma^t\Pr(X_t\in S\mid X_0=x) \, ,
\end{equation}
for any (measurable) set $S\subseteq\mathcal{X}$\footnote{This covers discounted occupancies over Polish state spaces, including compact Euclidean space.} and initial state $x \in \mathcal{X}$.
In the literature, $\Psi^\pi(\cdot\mid x)$ is often referred to
as the (discounted) state occupancy measure
\footnote{Here, occupancy measure is conditional on a source state; the usual occupancy is given by $\mu^\pi(S) = \int_{\mathcal{X}}\Psi^\pi(S\mid x)\,\mu_0(\mathrm{d}x)$.}
The object $\Psi^\pi$ described above is a normalised version of the \emph{successor representation} \citep[SR; ][]{dayan1993improving} in the tabular case and the \emph{successor measure} \citep[SM; ][]{blier2021learning,touati2021learning} for continuous state~spaces.
\citet{blier2021learning} shows that, without the $(1-\gamma)$ factor in \Eqref{eq:sr}, $\Psi^\pi(\cdot\mid x)$ is a measure
for each $x\in\mathcal{X}$ with total mass $(1-\gamma)^{-1}$. We include the $(1-\gamma)$ normalizing factor so that $\Psi^\pi(\cdot\mid x)$
is in fact a probability distribution -- this allows for one to sample from the successor measure, as in the work of \citet{janner2020gamma}.
Intuitively, $\Psi^\pi(S\mid x)$ describes the proportion of time spent in the region $S\subseteq\mathcal{X}$, in expectation, weighted by the discount factor according to the time of visitation.

Since for each $x \in \mathcal{X}$, $\Psi^\pi(\cdot|x)$ is a probability distribution over states, we can compute expectations under this distribution.
Notably, the reward function $r$, successor measure $\Psi^\pi$, and value function~$V^\pi$
satisfy the following identity,
\begin{align}\label{eq:zero-shot-eval}
    V^\pi_r(x) = (1-\gamma)^{-1} \mathbb{E}_{X' \sim \Psi^\pi(\cdot|x)}[r(X')] \, ,
\end{align}
as leveraged in the recent work of \citet{janner2020gamma} and \citet{blier2021learning}.
In words, the value function can be expressed as an expectation of the reward, with respect to the successor measure $\Psi^\pi(\cdot|x)$; this expression cleanly factorises the value function into components comprising transition information and reward information, and generalises the result in the tabular case by \citet{dayan1993improving}. A central consequence is that learning $\Psi^\pi$ allows for the evaluation of $\pi$ on unseen reward functions, without further learning; this is known as \emph{zero-shot policy evaluation}.

\subsection{Distributional Policy Evaluation}

In distributional reinforcement learning \citep{morimura2010nonparametric,Bellemare2017ADP,bdr2022}, the problem of \emph{distributional policy evaluation} is concerned with finding not just the expectation of the random return, but its full probability distribution. Analogous to our description of policy evaluation above, this can be mathematically expressed as aiming to learn the return-distribution function $\eta^\pi_r : \mathcal{X} \rightarrow \mathscr{P}(\mathbb{R})$, with $\eta^\pi_r(x)$ equal to the distribution of $\randret[r]{\pi}(x)$.

An added complication in the distributional setting is that the return distributions are infinite-dimensional, in contrast with the scalar mean returns learned in classical reinforcement learning. This requires careful consideration of how probability distributions will be represented algorithmically, with common choices including categorical \citep{Bellemare2017ADP} and quantile \citep{Dabney2018DistributionalRL} approaches; see \citet[][Chapter~5]{bdr2022} for a summary.

\section{The \AbsObj}\label{sec:dsr}
\begin{figure*}
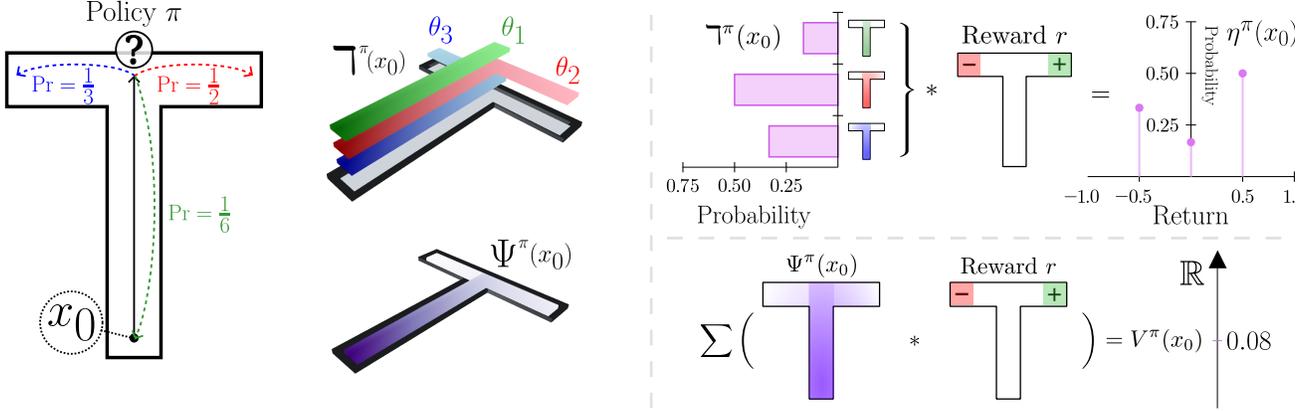

\centering
\begin{minipage}{0.2\linewidth}
\vspace{-0.8cm}
\includegraphics[width=\linewidth]{figures/tmazepolicy.pdf}
\end{minipage}
\hfill
\begin{minipage}{0.2\linewidth}
\includegraphics[width=\linewidth, trim={5cm 0 0 0}, clip]{figures/makiedsr}
\includegraphics[width=\linewidth, trim={5cm 0 0 0}, clip]{figures/makiesr}
\end{minipage}
\hfill
\begin{minipage}{0.5\linewidth}
\includegraphics[width=\linewidth]{figures/tmaze-sr-dsr-eval.pdf}
\end{minipage}
\caption{
Illustration of the standard and distributional successor measure (SM) in a T-Maze 
MDP, for a policy that moves to the fork and goes backwards, right, or left, 
with probabilities $\frac{1}{6},\frac{1}{2},\frac{1}{3}$. \textbf{Left}: The \absobj 
$\dsr$ (top) consisting of atoms $\theta_1, \theta_2, \theta_3$ depicting the
occupancy measures (probability distributions) corresponding to the distinct behaviors exhibited by the policy, 
and the SM $\Psi^\pi$ (bottom)
$\Psi^\pi = \frac{\theta_1}{6} + \frac{\theta_2}{2} +  \frac{\theta_3}{3}$.
\textbf{Right}: Zero-shot distributional policy evaluation (top) with $\dsr$
and zero-shot policy evaluation (bottom) with $\Psi^\pi$.
}
\label{fig:dsr_and_sr}
\end{figure*}

One of the core contributions of this paper is to introduce a mathematical object that plays the role of the successor measure in distributional reinforcement learning.
Analogous to how distributional RL models the distribution of the return, we  study the \emph{distribution} over future state occupancies.

\subsection{Random Occupancy Measures}

To begin, we contribute a new form for the normalised successor measure (SM), which shows that it can be written as an expectation of the discounted visitation distribution for the \emph{random} state sequence $(X_t)_{t \geq 0}$ generated by $\pi$:
\begin{equation*}
    \Psi^\pi(S\mid x) = \conditionalexpectation{\pi}{ \sum_{k=0}^\infty (1-\gamma) \gamma^k \delta_{X_k}(S)}{X_0 = x}
 \end{equation*}
for all measurable $S\subset\mathcal{X}$. Here, $\delta_{X_k}$ is the probability distribution over $\mathcal{X}$ that puts all its mass on $X_k$, so that $\delta_{X_k}(S) = \indicator{ X_k \in S}$.
We obtain a distributional version of this object by ``removing the expectation".

\begin{definition}[Random occupancy measure]\label{def:randocc}
    For a given policy $\pi$, let $(X_t)_{t=0}^\infty$ be a random sequence of states generated by interacting with the environment via $\pi$. The associated \emph{random discounted state-occupancy measure} $M^\pi$ assigns to each initial state $x \in \mathcal{X}$ a random probability distribution $M^\pi(\cdot\mid x)$ according to
    \begin{equation}
        \label{eq:random-occupancy-measure}
        \randocc(S\mid x) \defeq
        \sum_{k=0}^\infty (1-\gamma) \gamma^k \delta_{X_k}(S),\  X_0 = x \, .%
    \end{equation}%
\end{definition}%
It is worth pausing to consider the nature of the object we have just defined. For each $x \in \mathcal{X}$, $M^\pi(\cdot\mid x)$ is a random variable, and each realisation of $M^\pi(\cdot\mid x)$ is a probability distribution over $\mathcal{X}$. So, for any measurable $Y\subset\mathcal{X}$, $M^\pi(Y|x)$ is also a random variable, which gives the discounted proportion of time spent in $Y$ across different possible sampled trajectories. Thus, the distribution of $M^\pi(\cdot\mid x)$ is a distribution \emph{over} probability distributions; see Figure~\ref{fig:dsr_and_sr}.

As described in Section~\ref{sec:background}, an important property of the successor representation is that it is a linear operator that maps reward functions to value functions. The next result shows that $M^\pi$ can be used to map reward functions to random returns; all proofs are given in Appendix \ref{sec:proofs}.

\begin{restatable}{proposition}{randocclinear}\label{prop:randocc:linear-operator}
    Let $M^\pi$ denote a random discounted state-occupancy measure for a given policy $\pi$.
    For any deterministic reward function $r:\mathcal{X}\to\R$, we have
    \begin{equation}
      \label{eq:zero-shot-return-distribution}
      \randret[r]{\pi}(x) \eqlaw (1-\gamma)^{-1} \expectation{X'\sim M^\pi(\cdot\mid x)}{r(X')} \, .
    \end{equation}
    Note that the right-hand side \emph{is} a random variable, since $M^\pi(\cdot\mid x)$ itself is a random distribution.
\end{restatable}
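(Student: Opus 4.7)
The plan is to prove the stronger claim that equality holds \emph{pathwise} (almost surely), from which equality in distribution follows immediately. On a fixed realisation of the trajectory $(X_k)_{k\geq 0}$ with $X_0 = x$, the random measure $M^\pi(\cdot\mid x)$ is, by Definition \ref{def:randocc}, the countable convex combination of point masses $\sum_{k=0}^\infty (1-\gamma)\gamma^k \delta_{X_k}$; the coefficients sum to $1$, confirming it is a probability measure on $\mathcal{X}$. Integrating the reward function $r$ against this measure then reduces to a sum.

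The key computation is then
\begin{align*}
\expectation{X'\sim M^\pi(\cdot\mid x)}{r(X')}
&= \int_{\mathcal{X}} r(x')\, M^\pi(dx'\mid x) \\
&= \int_{\mathcal{X}} r(x') \sum_{k=0}^\infty (1-\gamma)\gamma^k \delta_{X_k}(dx') \\
&= \sum_{k=0}^\infty (1-\gamma)\gamma^k r(X_k),
\end{align*}
where interchanging sum and integral is the only nontrivial step. I would justify this by dominated convergence: $r$ is bounded by some constant $R_{\max}$ by assumption, so the partial sums $\sum_{k=0}^{N} (1-\gamma)\gamma^k |r(X_k)|$ are dominated by the integrable constant $R_{\max}$ (with respect to the finite counting measure weighted by $(1-\gamma)\gamma^k$). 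Equivalently, one may simply observe that the series on the right is absolutely convergent almost surely since $\sum_k (1-\gamma)\gamma^k R_{\max} = R_{\max} < \infty$.

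Multiplying both sides by $(1-\gamma)^{-1}$ yields
\[
(1-\gamma)^{-1} \expectation{X'\sim M^\pi(\cdot\mid x)}{r(X')} = \sum_{k=0}^\infty \gamma^k r(X_k) = \randret[r]{\pi}(x),
\]
where the final equality is the definition of the random return under $\pi$ starting from $X_0 = x$. Since this identity holds pathwise on the underlying probability space, the two random variables are almost surely equal, hence certainly equal in law.

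The only potential obstacle is a careful treatment of measurability, namely verifying that the map $\omega \mapsto M^\pi(\cdot\mid x)(\omega)$ is a bona fide random element of $\probset{\mathcal{X}}$ (equipped, say, with the weak topology) so that the expression on the right-hand side of \eqref{eq:zero-shot-return-distribution} is well-defined as a random variable. This follows from the fact that $\mathcal{X}$ is a Polish space, that each $X_k$ is a random element of $\mathcal{X}$ by construction, and that the countable weighted sum of Dirac measures at measurable points yields a measurable map into $\probset{\mathcal{X}}$; a brief appeal to these standard facts suffices. No deeper technicality is needed, so the proof is essentially a bookkeeping exercise around Fubini--Tonelli.
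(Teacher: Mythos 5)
Your proof is correct and follows essentially the same route as the paper's: a direct calculation integrating $r$ against the realised occupancy measure, with the sum--integral interchange justified by the boundedness of $r$ (the paper cites Fubini, you cite dominated convergence; these are interchangeable here). Your observation that the identity in fact holds pathwise, not merely in distribution, is a mild strengthening of the same argument rather than a different approach.
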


Proposition~\ref{prop:randocc:linear-operator} suggests a novel approach to distributional RL.
To obtain return distributions, one can first learn the distribution of $M^\pi$ (without any information about rewards), and then use \Eqref{eq:zero-shot-return-distribution} to obtain an estimate of the corresponding return distribution. This unlocks an ability that was not previously possible in distributional RL: zero-shot distributional policy evaluation. In particular, one can learn the distribution of the random occupancy measure, and then approximate the return distribution associated with \emph{any} reward function $r$ without requiring further learning (see Figure \ref{fig:dsr_and_sr}). Once the return distribution is obtained, the benefits of distributional RL, such as risk estimation, are immediately available, something not possible using SR in isolation.

\begin{remark}
    Perhaps surprisingly, our assumption of a deterministic reward function made in Proposition~\ref{prop:randocc:linear-operator} is necessary
    for a linear factorization between reward functions and return distributions.
    This is due to the statistical
    dependence between random rewards observed along trajectories and random trajectories
    themselves. 
    We explore this in more depth in Appendix~\ref{sec:further-discussion}.
\end{remark}

As in distributional RL, where we distinguish between the random return $G^\pi(x)$ and its distribution $\eta^\pi(x)$, we introduce notation for expressing the distribution of $M^\pi(\cdot\mid x)$.
\begin{definition}[\AbsObject]
    The \emph{\absobject} (\absobj) $\dsr : \mathcal{X} \rightarrow \mathscr{P}(\mathscr{P}(\mathcal{X}))$ is defined by $\daleth^\pi(x) = \law{M^\pi(\cdot\mid x)}$.
\end{definition}
Prior to this work, the only conceivable method for zero-shot distributional
policy evaluation involved learning $\statetransitionkernel$ and estimating return
distributions by sampling rollouts and returns from the learned model.
Indeed, we observe that the distributional SM (and thus the SM itself) is mathematically determined by $\statetransitionkernel$; see Proposition~\ref{prop:SRDeterminesDSR} in Appendix~\ref{sec:additional_results} for a precise statement and proof of this result.
Despite this, recovering the distributional SM or the SM from $\statetransitionkernel$ in large MDPs is intractable, and SM-like models are known to be more
robust to estimation error for long-horizon prediction in continuous MDPs \citep{janner2020gamma,thakoor2022generalised,touati2023does}.
Crucially, unlike an approach to zero-shot distributional evaluation that estimates return distributions by sampling rollouts from a learned $\statetransitionkernel$ and computing MC returns, the \absobj is \emph{not prone to accumulation of model error}, which results in substantially more accurate
estimation, as we show in Section \ref{sec:results}.

Proposition~\ref{prop:randocc:linear-operator} is the core mathematical insight of the paper; we now develop an algorithmic framework for translating these theoretical ideas into concrete implementations.

\subsection{\AbsObj Bellman Equations}

A central result in developing temporal-difference methods for learning $\dsr$ is that $\randocc$ satisfies a \emph{distributional Bellman equation} \citep{morimura2010nonparametric,Bellemare2017ADP}.

\begin{restatable}{proposition}{randoccbellman}\label{prop:randocc:bellman}
Let $M^\pi$ denote the random discounted state-occupancy measure induced by a policy $\pi$. Then $M^\pi$ can be expressed recursively via a distributional Bellman equation: for all measurable $S\subset\mathcal{X}$, and $X'\sim\statetransitionkernel(\cdot\mid x)$,
\begin{equation}
  \label{eq:randocc-bellman-equation}
  \randocc(S\mid x) \eqlaw (1-\gamma)\delta_x(S) + \gamma M^\pi(S\mid X').
\end{equation}
\end{restatable}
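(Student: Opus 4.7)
The plan is to unfold the series defining $M^\pi(\cdot\mid x)$ by separating the $k=0$ contribution, reindex the tail, and then invoke the Markov property to identify the tail in distribution with $M^\pi(\cdot\mid X_1)$ where $X_1\sim\statetransitionkernel(\cdot\mid x)$. Concretely, starting from Definition~\ref{def:randocc}, I would write, almost surely and for every measurable $S\subseteq\mathcal{X}$,
\begin{align*}
M^\pi(S\mid x) &= (1-\gamma)\delta_{X_0}(S) + \sum_{k=1}^\infty (1-\gamma)\gamma^k\delta_{X_k}(S) \\
&= (1-\gamma)\delta_x(S) + \gamma\sum_{j=0}^\infty(1-\gamma)\gamma^j\delta_{X_{j+1}}(S),
\end{align*}
where the reindexing sets $j=k-1$ and uses $X_0=x$.

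Next, I would package the tail as a random measure. Define $\widetilde M(S) \defeq \sum_{j=0}^\infty (1-\gamma)\gamma^j\delta_{X_{j+1}}(S)$. By the Markov property, conditional on $X_1$, the shifted trajectory $(X_{j+1})_{j\geq 0}$ has the same law as a fresh trajectory $(X'_j)_{j\geq 0}$ generated by $\pi$ with initial state $X'_0=X_1$. Consequently, as random elements of $\probset{\mathcal{X}}$,
\[
\widetilde M \eqlaw M^\pi(\cdot\mid X_1),\qquad X_1\sim\statetransitionkernel(\cdot\mid x),
\]
and therefore the full right-hand side $(1-\gamma)\delta_x + \gamma\widetilde M$ has the same distribution as $(1-\gamma)\delta_x + \gamma M^\pi(\cdot\mid X')$ with $X'\sim\statetransitionkernel(\cdot\mid x)$, which is exactly the claim.

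The main subtlety, and the only point that needs a little care, is that the identity must hold as random \emph{probability measures} on the Polish space $\mathcal{X}$, not merely pointwise in $S$ for a fixed $S$. This is handled by noting that the map sending a trajectory $(X_k)_{k\geq 0}$ to the measure $\sum_{k\geq 0}(1-\gamma)\gamma^k\delta_{X_k}\in\probset{\mathcal{X}}$ is measurable (it is a countable convex combination of Dirac measures, continuous in the weak topology as a function of the trajectory in $\mathcal{X}^{\mathbb{N}}$), so the distributional equality for the entire random measure follows from equality in law of the underlying shifted trajectories guaranteed by the Markov property. Absolute convergence of the series, uniformly in $S$, is immediate from $\sum_k (1-\gamma)\gamma^k = 1$, so no interchange-of-limit issues arise.
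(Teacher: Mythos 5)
Your proposal is correct and follows essentially the same route as the paper's proof: peel off the $k=0$ term, reindex the tail, and identify it in law with $M^\pi(\cdot\mid X')$ via the Markov property. Your additional remarks on measurability of the trajectory-to-measure map are a welcome extra level of care that the paper's three-line argument leaves implicit.
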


This provides a novel reward-agnostic distributional Bellman equation for random occupancy measures. Note that the multi-dimensional reward distributional Bellman equation studied by \citet{freirich2019distributional,zhang2021distributional} can be framed as an instance of \Eqref{eq:randocc-bellman-equation} when $\mathcal{X}$ is finite.

We can also express the \absobj recursively,
\begin{equation}\label{eq:dsr:bellman}
\dsr(x) = \expectation{X'\sim \statetransitionkernel(\cdot\mid x)}{\pushforward{(\bootpush{x})}{\dsr(X')}}
\end{equation}
where $\bootpush{x}:\probset{\mathcal{X}}\to\probset{\mathcal{X}}$ is given by $\bootpush{x}(\mu) 
= (1-\gamma)\delta_x + \gamma\mu$.
The notation $\pushforward{f}{\nu} = \nu\circ f^{-1}$ denotes the \emph{pushforward} of a
measure $\nu$ through a measurable function $f$.
This motivates the following operator on the space of \absobjs having $\dsr$ as a fixed point, which we refer to as the distributional
Bellman operator $\mathcal{T}^\pi:\probset{\occspace}^{\mathcal{X}}\to\probset{\occspace}^{\mathcal{X}}$,
\begin{equation}\label{eq:distributional-bellman-operator:dsm}
    (\mathcal{T}^\pi\daleth)(x) = \expectation{X'\sim\statetransitionkernel(\cdot\mid x)}{(\bootpush{x})_\sharp\daleth(X')}.
\end{equation}
The proceeding statements outline a convergent approach for computing the \absobj by dynamic programming.

\begin{restatable}[Contractivity of $\mathcal{T}^\pi$]{proposition}{dsmcontraction}\label{prop:dsm:contraction}
Let $d$ be a metric on $\mathcal{X}$ such that
$(\mathcal{X}, d)$ is a Polish space, and let $w_d$ denote the Wasserstein distance on $\probset{\mathcal{X}}$ with base distance $d$. If $W : \mathscr{P}(\mathscr{P}(\mathcal{X})) \times \mathscr{P}(\mathscr{P}(\mathcal{X})) \rightarrow \mathbb{R}$ is the Wasserstein distance on $\probset{\probset{\mathcal{X}}}$ with base distance $w_d$, then
\begin{align*}
    \overline{W}(\mathcal{T}^\pi\daleth_1, \mathcal{T}^\pi\daleth_2)\leq\gamma \overline{W}(\daleth_1, \daleth_2),
\end{align*}
where $\overline{W}$ is the ``supremal" $W$ metric given by $\overline{W}(\daleth_1, \daleth_2) = \sup_{x\in\mathcal{X}}W(\daleth_1(x), \daleth_2(x))$.
\end{restatable}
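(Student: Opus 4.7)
The plan is to establish the contraction pointwise, i.e.\ bound $W((\mathcal{T}^\pi\daleth_1)(x), (\mathcal{T}^\pi\daleth_2)(x))$ by $\gamma\overline{W}(\daleth_1,\daleth_2)$ for each $x\in\mathcal{X}$, and then take the supremum over $x$. The proof decomposes the operator $\mathcal{T}^\pi$ into two ingredients that can be analysed separately: (i) the pointwise bootstrap map $\bootpush{x}:\probset{\mathcal{X}}\to\probset{\mathcal{X}}$, $\mu\mapsto(1-\gamma)\delta_x+\gamma\mu$, and its pushforward $(\bootpush{x})_\sharp$ acting on $\probset{\probset{\mathcal{X}}}$; and (ii) the mixture over next-states $X'\sim\statetransitionkernel(\cdot\mid x)$. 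The discount factor $\gamma$ will enter exactly through the Lipschitz constant of $\bootpush{x}$, while the mixture step is non-expansive.

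First I would show that $\bootpush{x}$ is $\gamma$-Lipschitz from $(\probset{\mathcal{X}},w_d)$ to itself. Given any optimal coupling $(Y,Z)$ of $\mu$ and $\nu$ achieving $w_d(\mu,\nu)$, one may form a mixture coupling of $\bootpush{x}(\mu)$ and $\bootpush{x}(\nu)$ by outputting $(x,x)$ with probability $1-\gamma$ and $(Y,Z)$ with probability $\gamma$; the marginals match by construction and the expected base distance equals $\gamma\,w_d(\mu,\nu)$, giving $w_d(\bootpush{x}(\mu),\bootpush{x}(\nu))\leq\gamma\,w_d(\mu,\nu)$. A standard change-of-variables argument then promotes this to $(\bootpush{x})_\sharp$: for any coupling $(U,V)$ of $\daleth_1(y)$ and $\daleth_2(y)$ on $\probset{\mathcal{X}}\times\probset{\mathcal{X}}$, the pair $(\bootpush{x}(U),\bootpush{x}(V))$ is a coupling of $(\bootpush{x})_\sharp\daleth_1(y)$ and $(\bootpush{x})_\sharp\daleth_2(y)$ whose expected $w_d$-distance is at most $\gamma\,\mathbb{E}[w_d(U,V)]$. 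Infimising over couplings yields
\begin{equation*}
W\bigl((\bootpush{x})_\sharp\daleth_1(y),\,(\bootpush{x})_\sharp\daleth_2(y)\bigr)\leq\gamma\,W(\daleth_1(y),\daleth_2(y)).
\end{equation*}

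Next I would handle the expectation over $X'\sim\statetransitionkernel(\cdot\mid x)$ using the convexity (``Jensen-type'') inequality for the Wasserstein distance: if $\mu_y,\nu_y\in\probset{\probset{\mathcal{X}}}$ are measurable families and $p$ is a probability measure on $\mathcal{X}$, then $W(\int\mu_y\,p(\mathrm{d}y),\int\nu_y\,p(\mathrm{d}y))\leq\int W(\mu_y,\nu_y)\,p(\mathrm{d}y)$. This is obtained by selecting, for each $y$, a (measurable) near-optimal coupling of $\mu_y$ and $\nu_y$ and integrating against $p$ to obtain a coupling of the mixtures. Combining this with the previous Lipschitz bound gives, for every $x$,
\begin{align*}
W\bigl((\mathcal{T}^\pi\daleth_1)(x),(\mathcal{T}^\pi\daleth_2)(x)\bigr) &\leq \mathbb{E}_{X'\sim\statetransitionkernel(\cdot\mid x)}\bigl[W((\bootpush{x})_\sharp\daleth_1(X'),(\bootpush{x})_\sharp\daleth_2(X'))\bigr] \\
&\leq \gamma\,\mathbb{E}_{X'\sim\statetransitionkernel(\cdot\mid x)}\bigl[W(\daleth_1(X'),\daleth_2(X'))\bigr] \\
&\leq \gamma\,\overline{W}(\daleth_1,\daleth_2),
\end{align*}
and taking $\sup_x$ on the left-hand side finishes the argument.

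The main obstacle is the mixture/convexity step, which requires a measurable choice of near-optimal couplings of $\daleth_1(y)$ and $\daleth_2(y)$ as $y$ varies, and well-definedness of all Wasserstein quantities in the Polish setting. I would appeal to standard results on the Wasserstein space over a Polish space (which is itself Polish) and the existence of measurable selections of optimal couplings; alternatively, the inequality can be derived by first proving it for empirical/discrete mixtures and extending by density. A minor additional check is that $(\probset{\mathcal{X}},w_d)$ is Polish so that $W$ on $\probset{\probset{\mathcal{X}}}$ is itself a well-defined metric, which follows from the hypothesis that $(\mathcal{X},d)$ is Polish.
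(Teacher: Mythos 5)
Your proposal is correct and follows essentially the same route as the paper's proof: both are coupling arguments in which the discount factor enters through the mixture coupling $(1-\gamma)\delta_{(x,x)}+\gamma\Gamma$ showing that $\bootpush{x}$ is $\gamma$-Lipschitz for $w_d$, and the expectation over $X'\sim\statetransitionkernel(\cdot\mid x)$ is handled by gluing (near-)optimal couplings of $\daleth_1(x'),\daleth_2(x')$ into a coupling of the mixtures. Your packaging is slightly more modular (two separate lemmas versus the paper's single integrated coupling $\Gamma_1$ built from $\epsilon$-optimal couplings), and you explicitly flag the measurable-selection issue that the paper leaves implicit, but the substance is identical.
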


\begin{restatable}[Convergent Dynamic Programming]{corollary}{dsmbanach}\label{cor:dsm:banach}
Under the conditions of Proposition \ref{prop:dsm:contraction}, if the metric space $(\mathcal{X}, d)$ is compact,
then the iterates $(\daleth_k)_{k=0}^\infty$ given by
$\daleth_{k+1} = \mathcal{T}^\pi\daleth_k$
converge in $\overline{W}$ to $\dsr$, for any $\daleth_0\in\probset{\occspace}^{\mathcal{X}}$.
\end{restatable}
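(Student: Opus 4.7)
The plan is to invoke the Banach fixed-point theorem on the metric space $(\probset{\occspace}^{\mathcal{X}}, \overline{W})$. Proposition~\ref{prop:dsm:contraction} already delivers the $\gamma$-contractivity of $\mathcal{T}^\pi$ with $\gamma \in [0,1)$, so the two remaining ingredients are (i) completeness of the ambient metric space, and (ii) verification that $\dsr$ is the fixed point to which the iterates must converge.

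For completeness, I first note that when $(\mathcal{X}, d)$ is a compact Polish space, the Wasserstein space $(\probset{\mathcal{X}}, w_d)$ is itself compact Polish, by a standard result on Wasserstein spaces over compact base spaces. Iterating, $(\probset{\probset{\mathcal{X}}}, W)$ is also compact Polish, and in particular complete, with bounded diameter. This boundedness ensures that $\overline{W}(\daleth_1, \daleth_2) = \sup_{x \in \mathcal{X}} W(\daleth_1(x), \daleth_2(x))$ is finite for every pair of elements in $\probset{\occspace}^{\mathcal{X}}$. Completeness of the supremum metric then follows from a standard argument: any $\overline{W}$-Cauchy sequence $(\daleth_k)$ is pointwise Cauchy in the complete space $(\probset{\probset{\mathcal{X}}}, W)$, hence converges pointwise to some $\daleth_\infty$, and the Cauchy property upgrades this to uniform convergence, placing $\daleth_\infty$ in $\probset{\occspace}^{\mathcal{X}}$.

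Next, Equation~\eqref{eq:dsr:bellman} already exhibits $\dsr$ as a fixed point of $\mathcal{T}^\pi$, since by construction $\dsr(x) = \expectation{X' \sim \statetransitionkernel(\cdot \mid x)}{\pushforward{(\bootpush{x})}{\dsr(X')}} = (\mathcal{T}^\pi \dsr)(x)$ for every $x \in \mathcal{X}$. Combined with contractivity, this fixed point is unique, so any limit obtained from iteration must coincide with $\dsr$. Applying Banach's fixed-point theorem to $\mathcal{T}^\pi$ on $(\probset{\occspace}^{\mathcal{X}}, \overline{W})$ then yields, for any $\daleth_0$, the geometric bound $\overline{W}(\daleth_k, \dsr) \leq \gamma^k \overline{W}(\daleth_0, \dsr)$, which tends to zero as $k \to \infty$.

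The main subtlety I anticipate is confirming that compactness of $(\mathcal{X}, d)$ propagates cleanly through two successive Wasserstein constructions so that $\overline{W}$ is both finite-valued and complete on $\probset{\occspace}^{\mathcal{X}}$; this is where the compactness hypothesis (beyond what Proposition~\ref{prop:dsm:contraction} alone requires) earns its keep. Once that infrastructure is in place, the result is a direct application of Banach's theorem.
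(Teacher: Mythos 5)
Your proof is correct and follows essentially the same route as the paper's: contractivity from Proposition~\ref{prop:dsm:contraction}, the fixed-point identity $\dsr = \mathcal{T}^\pi\dsr$ from \eqref{eq:dsr:bellman}, and compactness of $(\mathcal{X},d)$ to guarantee that $\overline{W}$ is finite, yielding the geometric bound $\overline{W}(\daleth_k, \dsr) \leq \gamma^k\,\overline{W}(\daleth_0, \dsr) \to 0$. The only difference is that you additionally verify completeness of $(\probset{\occspace}^{\mathcal{X}}, \overline{W})$ in order to invoke Banach's theorem in full; the paper omits this step, since once the fixed point $\dsr$ is exhibited explicitly, iterating the contraction inequality against it requires only finiteness of $\overline{W}$ (bounded diameter, which compactness supplies), not completeness of the ambient space.
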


The proofs of Proposition \ref{prop:dsm:contraction} and Corollary \ref{cor:dsm:banach} rely on a novel coupling technique on the doubly-infinite-dimensional space $\probset{\occspace}$, which can be found in Appendix \ref{sec:additional_results:ddp}.

\section{Representing and Learning the \AbsObjAcronym}\label{sec:rep-learn-dsr}

The \absobj provides an alternative perspective on distributional reinforcement learning,
and opens up possibilities such as zero-shot distributional policy evaluation, which is not achievable with existing approaches 
to distributional RL. However, to turn these mathematical observations into practical algorithms, we need methods for
efficiently \emph{representing} and \emph{learning} the \absobj.

\subsection{Representation by \algobjs}\label{sec:rep-learn-dsr:disrete}

As in standard distributional RL, we cannot represent $\dsr$ within an algorithm exactly, as it is comprised 
of probability  distributions, which are objects having infinitely-many degrees of freedom. To make matters more complicated 
still, these are distributions not over the real numbers (as in standard distributional RL), but over $\mathscr{P}(\mathcal{X})$,
which may itself have infinitely-many degrees of freedom if $\mathcal{X}$ is infinite.
Thus, a tractable approximate representation is necessary.
We propose the \emph{equally-weighted particle (EWP) representation}, which is inspired
by the quantile representation of return distributions in standard distributional RL 
algorithms \citep{Dabney2018DistributionalRL,nguyen2020distributional}.
Under this representation, the approximation $\dsr[](x)$ of $\dsr(x)$ is represented as a sum of equally-weighted Dirac masses on the set $\probset{\mathcal{X}}$: 
$\dsr[](x) = \frac{1}{m}\sum_{i=1}^m\delta_{\theta_i(x)}$,
with $\theta_i(x) \in \mathscr{P}(\mathcal{X})$. The 
approximation problem now reduces to learning appropriate values $((\theta_i(x))_{i=1}^m 
: x \in \mathcal{X})$ of these Dirac masses.
It is important to note that these Dirac masses inhabit the space $\probset{\mathcal{X}}$ -- each Dirac is located on a
distribution of state. We must
find a set of $m$ state distributions such that the \emph{collection} of the learned distributions is optimal with respect
to a metric on $\probset{\probset{\mathcal{X}}}$.

\begin{figure}
    \centering
    \includegraphics[keepaspectratio,width=\linewidth]{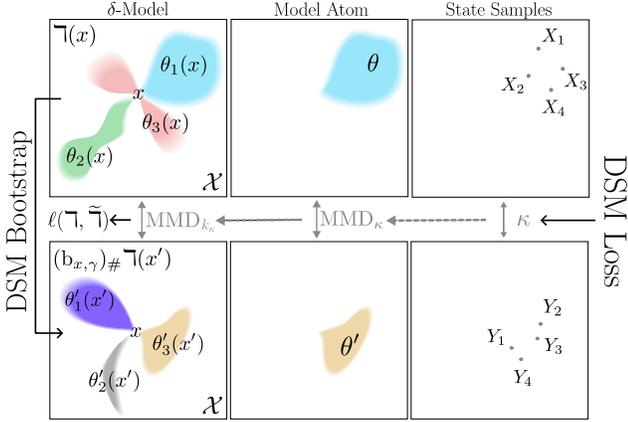}
    \caption{The components of a $\delta$-model (Section~\ref{sec:rep-learn-dsr:disrete}), and the kernels and distances involved in training them (Section~\ref{sec:rep-learn-dsr:mmd}).
    }
    \label{fig:delta-model}
\end{figure}
Since each atom $\theta_i(x)$ is a distribution over a potentially large space $\mathcal{X}$, we propose to represent the atoms as \emph{generative models}, in the spirit of $\gamma$-models \citep{janner2020gamma}. In practice, the generative models can be implemented with function approximators that take as input noise variables similar to the generator of a generative adversarial network \citep[GAN;][]{goodfellow14gan}.
We refer to such an EWP model as a \emph{\algobj}; Figure~\ref{fig:delta-model} illustrates its components.

\textbf{Terminology.} We have introduced two levels of probability distributions: $\daleth(x)$ is a distribution over the generative models $\{\theta_i(x)\}_{i=1}^m$; and each $\theta_i(x)$ is a distribution over the state space. To keep track of these two levels, we refer to $\dsr[](x)$ as a \textbf{model distribution} (that is, a distribution over generative models), and the generative models $\theta_i(x)$ as \textbf{state distributions} or \textbf{model atoms}. A generative model $\theta \sim \dsr[](x)$ distributed according to $\dsr[](x)$ is a \textbf{model sample}, while a state $X' \sim \theta$ sampled from a generative model is referred to as a \textbf{state sample}.

\subsection{Learning from Samples}\label{sec:rep-learn-dsr:mmd}

Our goal is to construct an algorithm for learning approximations of the distributions $\dsr(x)$, parameterized as \algobjs, from data. We construct a temporal-difference learning scheme \citep{sutton1984temporal,dayan1993improving} to approximately solve the distributional Bellman \Eqref{eq:randocc-bellman-equation} in this metric space, by updating our \algobj $\dsr[](x)$ to be closer to the transformation described by the right-hand side of the distributional Bellman equation in Proposition~\ref{prop:randocc:bellman}, that is
\begin{align}\label{eq:bellman-target}
    \widetilde{\dsr[]}(x) \defeq \expectation{X'\sim\statetransitionkernel(\cdot\mid x)}{\frac{1}{m} \sum_{i=1}^m \delta_{(1-\gamma)x + \gamma \theta_i(X')}} \, .
\end{align}
To define an update that achieves this, we will specify a loss function over the space occupied by $\dsr[](x)$ (namely $\mathscr{P}(\mathscr{P}(\mathcal{X}))$), distributions \emph{over} distributions of state); this requires care, since this space has such complex structure relative to standard distributional RL problems.
We propose to use the \emph{maximum mean discrepancy} \citep[MMD;][]{gretton2012kernel} to construct such a loss. 
We begin by recalling that for probability distributions $p,q$ over a set $\mathcal{Y}$, the MMD corresponding to the kernel $\kappa:\mathcal{Y}\times\mathcal{Y}\to\R$ is defined as
\begin{equation}\label{eq:mmd}
\small
    \begin{aligned}
    \mmd[\kappa]^2(p, q) =
    \expectation{}{\kappa(X, X') + \kappa(Y, Y') - 2\kappa(X, Y)}\\
    \quad(X, X')\sim p\otimes p,\ (Y, Y')\sim q\otimes q \, .
    \end{aligned}
\end{equation}

\textbf{State kernel.} To compare state distributions $\theta, \theta' \in \mathscr{P}(\mathcal{X})$, we will take a \textbf{state kernel} $\innerkernel : \mathcal{X} \times \mathcal{X} \rightarrow \mathbb{R}$, and aim to compute $\text{MMD}_{\innerkernel}(\theta, \theta')$. Since in $\delta$-models we represent state distributions $\theta, \theta'$ as generative models, we approximate the exact MMD in \Eqref{eq:mmd} by instead using samples from the generative models \citep[Eq. 3]{gretton2012kernel}. If we take $X_1,\ldots,X_{n_1} \overset{\mathrm{i.i.d.}}{\sim} \theta$, and $Y_1,\ldots,Y_{n_2} \overset{\mathrm{i.i.d.}}{\sim} \theta'$ independently, we obtain the following estimator for $\mmd[\innerkernel]^2(\theta_i, \theta_j)$:
{\small
\begin{align}\label{eq:inner-kernel-estimator}
        &\mmdsamples[\innerkernel]^2(X_{1:n_1}, Y_{1:n_2}) \defeq\\
        &\sum_{\substack{i,j=1 \\ i<j}}^{n_1} \frac{\innerkernel(X_i, X_j)}{\binom{n_1}{2}} + \sum_{\substack{i,j=1 \\ i< j}}^{n_2} \frac{\innerkernel(Y_i, Y_j)}{\binom{n_2}{2}}
        -2\sum_{i=1}^{n_1}\sum_{j=1}^{n_2} \frac{\innerkernel(X_i, Y_j)}{n_1n_2} \, .\tag*{}
\end{align}
}

\textbf{Model kernel.} \Eqref{eq:inner-kernel-estimator} uses the state kernel to define a metric between generative models. However, ultimately we need a loss function defined at the level of \emph{model distributions} $\dsr[](x)$, so that we can define gradient updates that move these quantities towards their corresponding Bellman targets (\Eqref{eq:bellman-target}). We now use our notion of distance between state distributions to define a kernel on $\mathscr{P}(\mathcal{X})$ itself, which will allow us to define an MMD over $\mathscr{P}(\mathscr{P}(\mathcal{X}))$, the space of model distributions. To do so, we follow the approach of \citet[Eq.~6]{ChrSte10} and \citet{SzaGrePocSri15} by defining a \textbf{model kernel} $\outerkernel{} : \mathscr{P}(\mathcal{X}) \times \mathscr{P}(\mathcal{X}) \rightarrow \mathbb{R}$ as a function of $\mmd[\innerkernel]$. In particular, for each $\theta, \theta' \in \mathscr{P}(\mathcal{X})$, we set
\begin{equation}\label{eq:outer-kernel}
    \outerkernel{}(\theta, \theta'; \sigma) = \rho\left({\mmd[\innerkernel](\theta, \theta')} / {\sigma}\right)
\end{equation}
for $\sigma>0$, where $\rho:y\mapsto (1 + y^2)^{-1/2}$ is the inverse multiquadric radial basis function. 
\citet[Table 1]{SzaGrePocSri15} shows that $\outerkernel{}$ is characteristic for this choice of $\rho$.

\textbf{DSM MMD loss.} We now specify a loss that will allow us to update $\dsr[]$ towards the Bellman target in \Eqref{eq:bellman-target}, by employing the MMD under the model kernel $\outerkernel{}$:
\begin{align}\label{eq:model-mmd}
    \textstyle
    \ell(\dsr[], \widetilde{\dsr[]}; x) = \text{MMD}^2_{\outerkernel{}}(\dsr[](x),\widetilde{\dsr[]}(x) ) \, .
\end{align}
To build a sample-based estimator of this loss, we take a sampled state transition $(x, x')$ generated by the policy $\pi$, and expand the MMD above in terms of evaluations of the kernel $\outerkernel{}$; writing $\theta_i(x) = (1-\gamma) \delta_x + \theta_i(x')$, this leads to the following loss for the $\delta$-model representation,
\begin{equation*}
    \small
    \begin{aligned}
    \frac{1}{m^2}
    \sum_{i,j=1}^m 
        \left[
        \outerkernel{}(
            \theta_i(x), \theta_j(x)
            ) %
        -2 \outerkernel{}(
            \theta_i(x),
            \bar{\theta}_j(\bar{x})
        )
        \right]
    \, . %
    \end{aligned}
\end{equation*}
Finally, to obtain a loss on which we can compute gradients in practice, each model kernel evaluation above be can be approximated via \Eqref{eq:outer-kernel}, with the resulting state kernel MMD estimated via \Eqref{eq:inner-kernel-estimator}. Note that we can sample from distributions of the form $(1-\gamma)\delta_x + \gamma \theta_i(x')$ by first sampling $Y \sim \text{Bernoulli}(1-\gamma)$, returning $x$ if $Y=1$, and otherwise returning an independent sample from $\theta_i(x')$.
See Figure~\ref{fig:delta-model} for an illustration of how the loss is constructed.

\section{Practical Training of \AlgObjs}\label{sec:deep-dsr}
Heretofore, we introduced a tractable representation and learning rule
for estimating the \absobj from data. This section highlights two techniques that are
crucial for stable learning; pseudocode is provided in Appendix \ref{appendix:alg}.

\subsection{$n$-step Bootstrapping}\label{sec:deep-dsr:bootstrapping}
The procedure outlined in Section~\ref{sec:rep-learn-dsr:mmd} computes \algobj targets
via one-step bootstrapping. In accordance with \Eqref{eq:randocc-bellman-equation}, the probability
mass of the targets due to bootstrapping is $\gamma$, which can be large when we are
concerned with long horizons.
Consequently, the signal-to-noise ratio in the targets is low,
which dramatically impedes learning.

Inspired by efforts to reduce the bias of bootstrapping in RL \citep{watkins1989learning,sutton2018reinforcement}, we compute $n$-step targets of the
\absobj. By \Eqref{eq:randocc-bellman-equation}, we have
\begin{equation}\label{eq:randocc:bellman:nstep}
\begin{aligned}
\randocc(\cdot\mid x) \eqlaw (1-\gamma)\sum_{i=0}^{n-1}\gamma^i\delta_{X_i} + \gamma^n\randocc(\cdot\mid X_n)
\end{aligned}
\end{equation}
where $X_{k+1}\sim \statetransitionkernel(\cdot\mid X_k)$ and $X_0=x$. 
An $n$-step version of the DSM MMD loss can then be obtained by replacing
the sampled one-step Bellman targets ${(1-\gamma) \delta_x + \gamma \dsr[](x')}$ in \Eqref{eq:model-mmd}
with the $n$-step target ${\sum_{k=0}^{n-1}(1-\gamma)\gamma^k \delta_{x_k} + \gamma^n \theta_i(x_n)}$.
In analogy with the one-step case, we can sample from this distribution by first sampling $Y$
from a Geometric($1-\gamma$) distribution, returning $x_k$ if $Y=k<n$, and returning a sample
from $\theta_i(x_n)$ otherwise.
By increasing $n$, we decrease the influence of 
bootstrap samples on the targets, leading to a stronger learning signal grounded in samples from the trajectory.

We found that training stability tends to improve substantially when bootstrap samples
account for roughly 80\% of the samples in the procedure above.
Appendix~\ref{appendix:experiments:ablation} includes a more detailed ablation on the choice of $n$. 
Notably, this procedure for computing TD targets for generative modeling of occupancy measures
is not specific to the \absobj or \algobjs.
We anticipate that this technique would generally be useful
for training geometric horizon models with longer horizons, which was reported to
be a major challenge \citep{janner2020gamma, thakoor2022generalised}.
\begin{figure*}
\centering
\begin{subfigure}{0.52\linewidth}
\centering
\includegraphics[width=\linewidth]{figures/dsr-results-pendulum.pdf}
\vspace{-0.5cm}
\subfiglabel{Return distribution predictions by \AbsObjAcronym.}\label{fig:exp:returns:pendulum:kde}
\end{subfigure}
\hfill
\begin{subfigure}{0.45\linewidth}
\centering
\vspace{0.45cm}
\includegraphics[width=\linewidth]{figures/cramer-pendulum.pdf}
\vspace{-0.5cm}
\subfiglabel{Quality of return distribution estimates.}\label{fig:exp:returns:pendulum:cramer}
\end{subfigure}
\label{fig:exp:returns:pendulum}
\end{figure*}

\subsection{Kernel Selection}\label{sec:deep-dsr:kernel}

When training a \algobj with bootstrapped targets, naturally the model/state
distributions comprising $\dsr$ are continually evolving. This poses a challenge when
selecting the kernels we use in practice,
since this non-stationarity prevents us from identifying an appropriate
similarity measure \emph{a priori}.
As such, we found it necessary to employ \emph{adaptive} kernels that evolve with the distributions being modeled.

Powerful methods in the literature involve adversarially learning a kernel over a space of parameterized
functions. The MMD-GAN \citep{mmdgan,binkowski18demystifying} demonstrates how to parameterize
characteristic kernels with deep neural networks. \citet{mmdgan} shows that
for any characteristic kernel $\innerkernel:\mathcal{Y}\times\mathcal{Y}\to\R_+$, the 
function $\innerkernel\circ f: (x, y)\mapsto\innerkernel(f(x), f(y))$ is itself a characteristic 
kernel when $f:\mathcal{X}\to\mathcal{Y}$ is injective. In their work, $f$ is parameterized as the 
encoder of an autoencoder network, where the autoencoder training encourages $f$ to be injective.

In the case of the \absobj, parameterizing the model kernel as an injection on the space of probability measures
is a major challenge. Rather, we parameterize an adversarial state kernel following the model of
\citet{mmdgan}, using an invertible neural network based on iResNet \citep{behrmann19iresnet}.
Unlike an autoencoder, this \emph{enforces} injectivity, and to our knowledge, no other work has
employed invertible neural networks for modeling an adversarial kernel.
It should be noted that the state kernel is itself defined as a parameter of the model kernel
used in the comparison of \algobjs\ -- thus, by adaptively learning the state kernel,
our model kernel is itself adaptive.

We also found that further adaptation of the model kernel through the bandwidth $\sigma$ improved training.
Our approach is based on the
\emph{median heuristic} for bandwidth selection in kernel methods 
\citep{DBLP:journals/jmlr/TakeuchiLSS06,gretton2012kernel}. Prior to computing the model
MMD, we choose $\sigma^2$ to be the median of the pairwise $\mmd[\innerkernel]^2$ between
the model atoms of $\dsr[](x)$ and those of the bootstrap target $\tilde{\dsr[]}(x)$.
Appendix~\ref{appendix:experiments:ablation} ablates on our choice of adaptive kernels.

\section{Experimental Results}\label{sec:results}

We evaluate our implementation of the \absobj on two domains, namely a stochastic ``Windy Gridworld" environment, where a pointmass navigates 2D continuous grid subject to random wind force that pushes
it towards the corners, and the Pendulum environment \citep{atkeson97pendulum}.
As a baseline, we compare our method to an ensemble of $\gamma$-models
\citep{janner2020gamma}, which is almost equivalent to a $\delta$-model, with the difference being that the individual $\gamma$-models of the ensemble are trained independently rather than coupled through the model MMD loss. We implement the $\gamma$-models with MMD-GAN, similarly to the 
individual model atoms of a \algobj. We train 
an ensemble of $m$ $\gamma$-models, where $m$ is the number of model atoms in the comparable
\algobj implementation of the \absobj.
Conceptually, the $\gamma$-model ensemble is expected to capture the \emph{epistemic uncertainty} over the SM,
while the \absobj estimates the aleatoric uncertainty due to randomness of the MDP dynamics and the policy.
Alternatively, one can learn a model of the transition kernel $\statetransitionkernel$ and estimate return distributions by rolling out trajectories from the learned model and computing the discounted returns for those trajectories. Thus, for the purpose of quantitative evaluation, we additionally train a model of $\statetransitionkernel$ (training details are discussed in Appendix \ref{appendix:experiments:baselines}) and compare the accuracy of
zero-shot return distribution prediction by the \absobj to those estimated by rolling out trajectories from the forward model as described. Note, however, that beyond any difference in estimation
quality, the \absobj presents a major computational advantage over estimation via $\statetransitionkernel$: with the \absobject, it is not necessary to sample
long episode trajectories.

\textbf{Visualizing model atoms.}
In Figure \ref{fig:exp:windy:srdsr}, we examine the model atoms predicted by our implementation
of the \algobj trained on data from a uniform random policy in the Windy Gridworld. Due to the
nature of the wind in this domain, which always forces the agent to the corner of the quadrant
where it is located, a uniform random policy exhibits a multimodal distribution of model atoms,
as shown by the colored densities in the top-left. Alternatively, when examining an ensemble of
$\gamma$-models trained on the same data, we see that the models in the ensemble all predict similar
state occupancies which align closely with the SM -- crucially, only the \absobj captures the
diversity of ``futures" that the agent can experience.

\textbf{Zero-shot policy evaluation.}
A unique feature of the \absobj is that it acts as an operator that transforms reward functions
into return distribution functions. We explore the distributions over returns predicted by the
\absobj for several held-out reward functions and analyze their similarity with return
distributions estimated by Monte Carlo. Figure \ref{fig:exp:returns:pendulum:kde} showcases return
distributions predicted by the \absobj on four tasks in the Pendulum environment meant to model constraints that may be imposed on the system (\texttt{Default}, \texttt{Above Horizon}, \texttt{Stay Left}, \texttt{Counterclockwise Penalty}; details in Appendix \ref{sec:experiments:pendulum}). We can see that these predictions capture important statistics, such as the mode and the support of the
distributions, which could not be captured by point estimates of the return. Similar results in Windy Gridworld are shown in Appendix \ref{sec:experiments:windy}. For quantitative
evaluation, Figure \ref{fig:exp:returns:pendulum:cramer} reports the quality of the return distribution predictions by their dissimilarity
to the return distributions estimated by Monte Carlo according to the Cram\'er distance \citep{szekely2013energy,bellemare2017cramer}.
We compare the \AbsObjAcronym predictions to those computed by three baselines: return 
distributions estimated by sampling rollouts from a learned transition kernel (labeled 
$\mathsf{Rollout}$), return distributions imputed from value function predictions among
an ensemble of $\gamma$-models, and return distributions constructed by placing a Dirac
mass at the MC expected return (labeled $\mathsf{Mean}$).
Among these baselines and existing methods in the literature, only $\mathsf{Rollout}$ can 
produce proper return distribution estimates \emph{in principal}. However, we find that 
accumulation of error throughout sampled trajectories prevents this model from 
achieving reasonable return distributions, which is consistent with the difficulties of 
accurately rolling out long trajectories from learned forward models \citep{jafferjee2020hallucinating,abbas2020selective,lambert2022investigating}. 
While the ensemble of $\gamma$-models is not modeling the aleatoric uncertainty
of occupancy measures, we find that its superior ability to model long-horizon behavior
enables it to estimate return distributions more accurately, achieving similar
quality to a Dirac mass centered at the ground truth mean return. The DSM predictions
substantially outperform all baselines, demonstrating that the proposed \algobj retains
the long-horizon consistency of $\gamma$-models, while additionally providing aleatoric
uncertainty estimates far beyond the capabilities of the learned~$\statetransitionkernel$.

\begin{figure*}
\centering
\begin{tabular}{r@{\hskip12pt} | l}
\begin{subfigure}{0.315\linewidth}
\includegraphics[width=\linewidth,trim=5cm -2.5cm 0 0,clip]{figures/dsr-sr-gamma-2x2.pdf}
\caption{}\label{fig:exp:windy:srdsr}
\end{subfigure}
&
\begin{subfigure}{0.635\linewidth}
\includegraphics[keepaspectratio,width=\linewidth]{figures/windy-policysel-sns.pdf}
\caption{}\label{fig:exp:windy:policysel}
\end{subfigure}
\end{tabular}
\caption{\AbsObject predictions in Windy Gridworld. \textbf{(\ref{fig:exp:windy:srdsr})}: Figures in the left column show the 
model atoms predicted by the \absobj (distinguished by color) and by an ensemble of $\gamma$-models. Figures in the right column show the mean over \absobj model atoms and the SM itself. \textbf{(\ref{fig:exp:windy:policysel})}: \AbsObj predictions of return statistics on held-out reward functions for two policies, $\pi_1,\pi_2$. For each reward function, the \absobj correctly ranks policies with respect to both mean and CVaR.}\label{fig:exp:windy}
\end{figure*}

\textbf{Risk-sensitive policy selection.}
Finally, we demonstrate that \absobjs can be used to effectively rank policies by various risk-sensitive
criteria on held-out reward functions. In Figure \ref{fig:exp:windy:policysel}, we train \absobjs
for two different policies, and use them to predict return distributions for two reward functions.
We focus on two functionals of these return distributions, namely the mean and the conditional value
at risk at level $0.4$ \citep[][$0.4$-CVaR]{rockafellar2002conditional}. We see that for both reward
functions, the \absobj accurately estimates both functionals, and is able to correctly identify
the superior policy for each criterion. Particularly, for the \texttt{Lopsided Checkerboard} reward,
the \absobj identifies $\pi_1$ as superior with respect to mean reward (identified by locations of
solid blue lines), and alternatively identifies $\pi_2$ as superior with respect to $0.4$-CVaR
of the return (identified by locations of the solid pink lines). These rankings are validated by
the locations of the dashed lines, which are computed by Monte Carlo. We note that, to our
knowledge, \emph{no other method can accomplish this feat}.
On the one hand, existing distributional RL
algorithms could not evaluate the return distributions for held-out reward functions.
On the other hand, any algorithm rooted in the SM for zero-shot
evaluation can only rank policies by their mean returns, so they must fail to rank $\pi_1,\pi_2$ by at least one of the objectives.

\section{Related Work}\label{sec:related-work}
The successor representation \citep[SR;][]{dayan1993improving}, originally proposed for finite-state MDPs, has recently been expanded to continuous spaces by leveraging both generative models \citep{janner2020gamma,thakoor2022generalised} and density models \citep{blier2021learning,blier2022some}.
Successor features \citep[SFs;][]{barreto2016successor, barreto2020fast} generalize the SR by modelling a discounted sum of state \emph{features}. %
These models are notable for their ability to perform both zero-shot policy evaluation \citep{dayan1993improving,barreto2016successor} and optimization \citep{borsa2018universal,touati2021learning,touati2023does}.

Closely related to our work is that of \citet{gimelfarb2021risk} and \citet{carvalho2023combining}, which applied the distributional RL techniques of \citet{Bellemare2017ADP} and \citet{achab2023one} to learn categorical distributions over features, but did not account for their joint distribution. \citet{gimelfarb2021risk} uses these distributions to optimize an entropic risk objective.
These approaches also bears a close relationship with the emerging field of multivariate distributional RL \citep{freirich2019distributional,zhang2021distributional}, which does learn joint distributions over finite dimensional features. In particular \citet{zhang2021distributional} make use of an MMD loss for learning multivariate return distributions, building on the scalar approach of \citet{nguyen2020distributional}.
Finally, \citet{vertes2019neurally} consider the task of learning the SR in POMDPs, which they~referred~to~as~a~distributional~SR.

Beyond transferring knowledge across tasks, learning long-term temporal structure can enhance the 
representation quality of function approximators for individual sequential decision-making problems 
\citep{lelan23subspace,farebrother2023proto, ghosh2023reinforcement}, guiding exploration 
\citep{jinnai2019cover,machado20sr,jain2024maximum}, modeling temporal abstraction 
\citep{machado2018eigen,machado2023temporal}, improving off-policy estimation \cite{nachum2020reinforcement,fujimoto2021deep}, imitation learning \cite{sikchi2023dual,pirotta24bfmil}, and amortizing planning 
\citep{eysenbach2020c,eysenbach2022contrastive,thakoor2022generalised}, 
as well as other forms of risk-sensitive decision making \citep{zhang2021cautious}.
The successor representation also plays a key explanatory role in understanding generalization in RL  \citep{lelan2022generalization,lelan23bootstrap}.
Additionally, both distributional RL 
\citep{dabney2020distributional,lowet2020distributional}  
and successor representations
\citep{stachenfeld14sr,stachenfeld2017hippocampus,momennejad2017successor}
have been shown to provide plausible models for~biological~phenomena~in~the~brain.

\section{Conclusion}\label{sec:discussion}

This paper presents a fundamentally new approach to distributional RL, which factorizes return distributions into components comprising the immediate reward function 
and the \emph{\absobject}. This factorisation reveals the prospect of 
zero-shot distributional policy evaluation. Notably, this enables efficient comparisons between policies on unseen tasks with respect to arbitrary risk criteria, which no other existing methods have demonstrated. We have also presented a tractable algorithmic framework for 
training $\delta$-models, which approximate the distributional SM with diverse 
generative models, and have identified crucial techniques for large-scale
training  of $\delta$-models in practice.

\section*{Acknowledgements}
The authors would like to thank Eric Zimmermann, Diana Borsa, Marek Petrik, Erick Delage, Nathan U. Rahn, Pierluca~D'Oro, Arnav Jain, Max Schwarzer, Igor Mordatch, and Pablo Samuel Castro for their invaluable discussions and feedback on this work.
This work was supported by the Fonds de Recherche du Qu\'ebec, the National Sciences and Engineering Research Council of Canada (NSERC), Calcul Qu\'ebec, the Digital Research Alliance of Canada, and the Canada CIFAR AI Chair program.

We would also like to thank the Python community whose contributions made this work possible. In particular, this work made extensive use of Jax \citep{jax}, Flax \citep{flax}, Optax \citep{deepmind2020jax}, EinOps \citep{rogozhnikov2022einops}, and Seaborn \citep{waskom2021seaborn}.

\section*{Impact Statement}
This paper presents work whose goal is to advance the field of Machine Learning. There are many potential societal consequences of our work, none which we feel must be specifically highlighted here.
{
\hypersetup{urlcolor=black}
\bibliography{references}
\bibliographystyle{icml2024/icml2024}
}

\clearpage

\appendix
\onecolumn

\section{Algorithm}\label{appendix:alg}

In this section, we restate the core $\delta$-model update derived in Section~\ref{sec:rep-learn-dsr}, including the $n$-step bootstrapping and adversarial kernel modifications described in Section~\ref{sec:deep-dsr}.
\ifdefined\isaccepted
Source code is provided at \url{https://github.com/jessefarebro/distributional-sr}.
\fi

\begin{center}
\scalebox{1.0}{
\begin{minipage}{\linewidth}
\begin{algorithm}[H]
    \begin{algorithmic}
        \STATE{\textbf{Require: } Policy $\pi$ with stationary distribution $d_\pi$, GAN generator $\Phi$, GAN parameters $\{\zeta_i\}_{i=1}^m$ and target parameters $\{\overline{\zeta}_i\}_{i=1}^m$, discriminator function $f$, adversarial kernel parameters $\{\xi_i\}_{i=1}^m$, kernel $\kappa$, step sizes $\alpha,\lambda$, number of state samples $s$.}
        \WHILE{training}
            \STATE Set $\xi_i\leftarrow \xi_i + \alpha\nabla_{\xi_i}\ell(\{\zeta_j\}_{j=1}^m,\{\bar{\zeta}_j\}_{j=1}^m, \{\xi_j\}_{j=1}^m)$ for $i=1,\dots, m$\COMMENT{Discriminator: maximize model MMD}
            \STATE Set $\zeta_i\leftarrow \zeta_i - \alpha\nabla_{\zeta_i}\ell(\{\zeta_j\}_{j=1}^m,\{\bar{\zeta}_j\}_{j=1}^m, \{\xi_j\}_{j=1}^m)$ for $i = 1,\dots,m$ \COMMENT{Generator: minimize model MMD}
            \STATE Set $\bar{\zeta}_i\leftarrow (1-\lambda)\bar{\zeta}_i + \lambda\zeta_i$ for $i=1,\dots,m$ \COMMENT{Generator: target parameter update}
        \ENDWHILE
        \STATE
        \FUNCTION{$\ell(\{\zeta_i\}_{i=1}^m, \{\bar{\zeta}_i\}_{i=1}^m, \{\xi_i\}_{i=1}^m)$}
        \STATE Sample $x_1\sim d_\pi, x_k \sim P^\pi(\cdot\mid x_{k-1})$ for $k=2,\ldots,n$.
        \FOR{$i=1,\ldots,m$}
            \STATE Sample $z_i^1,\ldots, z_i^{s}$ i.i.d.\ from GAN noise distribution
            \STATE Set $x_i^j \leftarrow \Phi(z_i^j; x_1, \zeta_i)$ for $j=1,\ldots,s$
            \STATE Sample $\omega_i^1,\ldots,\omega_i^{s}$ i.i.d.\ from GAN noise distribution
            \STATE \quad and $Y_i^1,\ldots,Y_i^{s}$ i.i.d.\ from $\text{Geometric}(1-\gamma)$
            \STATE Set $\bar{x}_i^j \leftarrow \Phi(\omega_i^j; x_n, \bar{\zeta}_i)$ if $Y_i^j\geq n$, else set $\bar{x}_i^j \leftarrow x_{Y_i^j}$, for  $j=1,\ldots,s$
            \STATE Set $y_i^j\leftarrow f(x_i^j, \xi_i)$ for $j=1,\ldots s$
            \COMMENT{Adversarial Kernel Transformations}
            \STATE Set $\bar{y}_i^j\leftarrow f(\bar{x}_i^j, \xi_i)$ for $j=1,\ldots s$
        \ENDFOR
        \FOR[MMDs Between Source Model Atoms]{$i=1,\dots,m$}%
        \FOR{$i'=1,\ldots,m$}
        \STATE Set $\displaystyle d^s_{i,i'}\leftarrow \frac{1}{\binom{s}{2}}\sum_{\substack{l,k=1\\l < k}}^s\innerkernel(y_i^k, y_i^l) + \frac{1}{\binom{s}{2}}\sum_{\substack{l,k=1\\l < k}}^s\innerkernel(y_{i'}^k,y_{i'}^l) - \frac{2}{s^2}\sum_{l,k=1}^s\innerkernel(y_i^k, y_{i'}^l)$
        \COMMENT{\Eqref{eq:inner-kernel-estimator}}
        \ENDFOR
        \ENDFOR
        \FOR[MMDs Between Target Model Atoms]{$i=1,\dots,m$}%
        \FOR{$i'=1,\ldots,m$}
        \STATE Set $\displaystyle d^t_{i,i'}\leftarrow \frac{1}{\binom{s}{2}}\sum_{\substack{l,k=1\\l < k}}^s\innerkernel(\bar{y}_i^k, \bar{y}_i^l) + \frac{1}{\binom{s}{2}}\sum_{\substack{l,k=1\\l < k}}^s\innerkernel(\bar{y}_{i'}^k,\bar{y}_{i'}^l) - \frac{2}{s^2}\sum_{l,k=1}^s\innerkernel(\bar{y}_i^k, \bar{y}_{i'}^l)$
        \ENDFOR
        \ENDFOR
        \FOR[MMDs Across Source and Target Model Atoms]{$i=1,\dots,m$}%
        \FOR{$i'=1,\ldots,m$}
        \STATE Set $\displaystyle d^{st}_{i,i'}\leftarrow \frac{1}{s^2}\sum_{l,k=1}^s\innerkernel(y_i^k, \bar{y}_i^l) + \frac{1}{s^2}\sum_{l,k=1}^s\innerkernel(y_{i'}^k,\bar{y}_{i'}^l) - \frac{2}{s^2}\sum_{l,k=1}^s\innerkernel(y_i^k, \bar{y}_{i'}^l)$
        \ENDFOR
        \ENDFOR
        \STATE Set $\sigma^2 = \mathrm{Median}\left(\mathrm{Concat}\left(\{d^s_{i,i'}\}, \{d^t_{i,i'}\}, \{d^{st}_{i,i'}\}\right)\right)$\COMMENT{Adaptive Model Kernel Bandwidth}
        \STATE Set $\displaystyle L\leftarrow \frac{1}{m^2}\sum_{i,j=1}^m\left(\rho(\sqrt{d^s_{i,j}/\sigma^2}) + \rho(\sqrt{d^t_{i,j}/\sigma^2}) - 2\rho(\sqrt{d^{st}_{i,j}/\sigma^2})\right)$\COMMENT{Model MMD}
        \ENDFUNCTION
    \end{algorithmic}
    \caption{\algobj update.}\label{alg:dsr}
\end{algorithm}
\end{minipage}
}
\end{center}
\vfill

\section{Proofs}\label{sec:proofs}
\randocclinear*

\begin{proof}
This result can be verified by a direct calculation. Invoking Definition~\ref{def:randocc}, we have
\begin{align*}
    (M^\pi r)(x)
    &= \int_{\mathcal{X}}r(x')M^\pi(\mathrm{d}x'\mid x)\\
    &\eqlaw \Conditional{\int_{\mathcal{X}}\sum_{t\geq 0}(1-\gamma)\gamma^t r(x') \delta_{X_t}(\mathrm{d}x')}{X_0=x} \\
    &\eqlaw (1-\gamma)\Conditional{\sum_{t\geq 0}\gamma^tr(X_t)}{X_0=x}\\
    &\eqlaw(1-\gamma)\randret[r]{\pi}(x)
\end{align*}
where the third step invokes Fubini's theorem subject to the boundedness of $r$.
The claimed result simply follows by dividing through by $1-\gamma$.
\end{proof}

\randoccbellman*

\begin{proof}
    By Definition \ref{def:randocc}, we have
    \begin{align*}
        M^\pi(\cdot \mid x)
        &\eqlaw (1-\gamma)\delta_x + \sum_{t=1}^\infty (1 - \gamma) \gamma^t\delta_{X_t} \\
        &\eqlaw (1-\gamma)\delta_x + \gamma \sum_{t=0}^\infty(1-\gamma)\gamma^t\delta_{X_{t+1}} \\
        &\eqlaw (1-\gamma)\delta_{x} + \gamma M^\pi( \cdot \mid X') \, ,
    \end{align*}
    with the final equality in distribution following from the Markov property.
\end{proof}

\subsection{Distributional Dynamic Programming}\label{sec:additional_results:ddp}
In this section, we demonstrate how the distributional SM can be computed by dynamic programming. Following familiar techniques in the analysis of dynamic programming algorithms,
we will demonstrate that the distributional SM is the unique fixed point of a contractive operator, and appeal to the Banach fixed point theorem.

To begin, we will define the operator of interest, which we refer to as the distributional
Bellman operator $\mathcal{T}^\pi:\probset{\occspace}^{\mathcal{X}}\to\probset{\occspace}^{\mathcal{X}}$,

\begin{align*}
    (\mathcal{T}^\pi\daleth)(x) = \expectation{X'\sim\statetransitionkernel(\cdot\mid x)}{(\bootpush{x})_\sharp\daleth(X')}.
\end{align*}

It follows directly from \Eqref{eq:dsr:bellman} that $\dsr=\mathcal{T}^\pi\dsr$.

\dsmcontraction*
\begin{proof}
    Our approach is inspired by the coupling approach proposed by \citet{amortila2020distributional}. Denote by $\Pi(p, q)$ the set of couplings between distributions $p, q$.
    
    Let $\Gamma_{1,x'}\in\Pi(\daleth_1(x'), \daleth_2(x'))$ denote an $\epsilon$-optimal coupling
    with respect to the Wasserstein distance $W$, in the sense that
    
    \begin{align*}
        \int_{\probset{\mathcal{X}}}\int_{\probset{\mathcal{X}}}w_d(p, q)\Gamma_{1,x'}(\mathrm{d}p\times \mathrm{d}q)
        \leq W(\daleth_1(x'), \daleth_2(x')) + \epsilon
    \end{align*}
    
    for arbitrary $\epsilon>0$. Firstly, we note that $\Gamma_1\in\Pi((\mathcal{T}^\pi\daleth_1)(x), (\mathcal{T}^\pi\daleth_2)(x))$, where
    
    \begin{align*}
        \Gamma_1 &= \int_{\mathcal{X}}\statetransitionkernel(\mathrm{d}x'\mid x)
        \left[(\bootpush{x}, \bootpush{x})_\sharp\Gamma_{1,x'}\right].
    \end{align*}
    
    Here, $(\bootpush{x},\bootpush{x})_\sharp\Gamma_{1,x'}(A\times B) = \Gamma_{1,x'}(\bootpush{x}^{-1}(A)\times\bootpush{x}^{-1}(B))$ for measurable $A,B\subset\probset{\mathcal{X}}$. To see this, we note that for any measurable $P\subset\probset{\mathcal{X}}$,
    
    \begin{align*}
        \Gamma_1(P\times\probset{\mathcal{X}})
        &= \int_{\mathcal{X}}\statetransitionkernel(\mathrm{d}x'\mid x)\Gamma_{1,x'}(\bootpush{x}^{-1}(P)\times \probset{\mathcal{X}})\\
        &= \int_{\mathcal{X}}\statetransitionkernel(\mathrm{d}x'\mid x)\daleth_1(x')(\bootpush{x}^{-1}(P))\\
        &= \int_{\mathcal{X}}\statetransitionkernel(\mathrm{d}x'\mid x)\left[(\bootpush{x})_\sharp\daleth_1(x')\right](P)\\
        &\equiv \left[(\mathcal{T}^\pi\daleth_1)(x)\right](P),
    \end{align*}
    
    so that the first marginal of $\Gamma_1$ is $(\mathcal{T}^\pi\daleth_1)(x)$. Likewise, the second marginal of $\Gamma_1$ is $(\mathcal{T}^\pi\daleth_2)(x)$,
    confirming that $\Gamma_1$ is a coupling between $(\mathcal{T}^\pi\daleth_1)(x)$ and
    $(\mathcal{T}^\pi\daleth_2)(x)$. It follows that
    
    \begin{align*}
        \overline{W}(\mathcal{T}^\pi\daleth_1, \mathcal{T}^\pi\daleth_2)
        &= \sup_{x\in\mathcal{X}}W((\mathcal{T}^\pi\daleth_1)(x), (\mathcal{T}^\pi\daleth_2)(x))\\
        &\leq \sup_{x\in\mathcal{X}}\int_{\occspace}\int_{\occspace}w_d(p, q)\Gamma_1(\mathrm{d}p\times\mathrm{d}q)\\
        &= \sup_{x\in\mathcal{X}}\int_{\occspace}\int_{\occspace}\int_{\mathcal{X}}w_d(p, q)\statetransitionkernel(\mathrm{d}x'\mid x)\left[(\bootpush{x},\bootpush{x})_\sharp\Gamma_{1,x'}\right](\mathrm{d}p\times\mathrm{d}q)\\
        &= \sup_{x,x'\in\mathcal{X}}\int_{\occspace}\int_{\occspace}w_d(\bootpush{x}(p), \bootpush{x}(q))\Gamma_{1,x'}(\mathrm{d}p\times\mathrm{d}q).
    \end{align*}
    
    We now claim that $w_d(\bootpush{x}(p), \bootpush{x}(q))\leq\gamma w_d(p, q)$ for any
    $p,q\in\probset{\mathcal{X}}$. To do so, let $\Gamma_2\in\Pi(p, q)$ be an optimal coupling
    with respect to $w_d$, which is guaranteed to exist since $(\mathcal{X},d)$ is a Polish space
    \citep{villani2008optimal}. Define $\Gamma_3\in\probset{\mathcal{X}\times\mathcal{X}}$ such that
    
    \begin{align*}
        \Gamma_3 &= (1-\gamma)\delta_{(x, x)} + \gamma\Gamma_2.
    \end{align*}
    
    It follows that, for any measurable $X\subset\mathcal{X}$,
    \begin{align*}
        \Gamma_3(X\times\mathcal{X})
        &= (1-\gamma)\delta_{(x,x)}(X\times\mathcal{X})+ \gamma\Gamma_2(X\times\mathcal{X})\\
        &= (1-\gamma)\delta_x(X) + \gamma\Gamma_2(X\times\mathcal{X})\\
        &= (1-\gamma)\delta_x(X) + \gamma p(X)\\
        &= \bootpush{x}(p)(X)
    \end{align*}
    
    which confirms that $\bootpush{x}(p)$ is the first marginal of $\Gamma_3$. An analogous argument for the second marginal shows that $\Gamma_3$ is a coupling between
    $\bootpush{x}(p), \bootpush{x}(q)$. So, we see that
    
    \begin{align*}
        w_d(\bootpush{x}(p), \bootpush{x}(q))
        &= \inf_{\Gamma\in\Pi(\bootpush{x}(p), \bootpush{x}(q))}\int_{\mathcal{X}}\int_{\mathcal{X}}d(y, y')\Gamma(\mathrm{d}y\times\mathrm{d}y')\\
        &\leq \int_{\mathcal{X}}\int_{\mathcal{X}}d(y, y')\Gamma_3(\mathrm{d}y\times\mathrm{d}y')\\
        &= (1-\gamma)d(x, x)+\gamma \int_{\mathcal{X}}\int_{\mathcal{X}}d(y, y')\Gamma_2(\mathrm{d}y\times\mathrm{d}y')\\
        &= \gamma w_d(p, q).
    \end{align*}
    
    Now, continuing the bound from earlier, we have
    
    \begin{align*}
        \overline{W}(\mathcal{T}^\pi\daleth_1, \mathcal{T}^\pi\daleth_2)
        &\leq \sup_{x,x'\in\mathcal{X}}\int_{\occspace}\int_{\occspace}w_d(\bootpush{x}(p), \bootpush{x}(q))\Gamma_{1,x'}(\mathrm{d}p\times\mathrm{d}q)\\
        &\leq \gamma\sup_{x\in\mathcal{X}}\int_{\occspace}\int_{\occspace}w_d(p, q)\Gamma_{1,x}(\mathrm{d}p\times\mathrm{d}q)\\
        &\leq\gamma\sup_{x\in\mathcal{X}}\left[W(\daleth_1(x), \daleth_2(x)) + \epsilon\right]\\
        &= \gamma\overline{W}(\daleth_1, \daleth_2) + \gamma\epsilon
    \end{align*}
    
    Thus, since $\epsilon>0$ was arbitrary, the claim follows.
\end{proof}

\dsmbanach*
\begin{proof}
    Prior to applying the Banach fixed point theorem it is necessary to ensure that $\overline{W}$ is finite on $\probset{\occspace}^{\mathcal{X}}$ to ensure that a fixed point will be reached.
    Since $\mathcal{X}$ is compact and metrics are continuous, it follows that the metric $d$ is bounded over
    $\mathcal{X}$, that is,
    
    \begin{align*}
        \sup_{x,y\in\mathcal{X}}d(x, y)\leq C < \infty
    \end{align*}
    
    for some constant $C$. As such, the Wasserstein distance $w_d$, as an expectation over distances measured by $d$, is also bounded by $C$, and following the same logic, the metrics
    $W,\overline{W}$ are bounded by $C$. Then, since it is clear from \eqref{eq:dsr:bellman} and \eqref{eq:distributional-bellman-operator:dsm} that $\dsr=\mathcal{T}^\pi\dsr$, we have
    
    \begin{align*}
        \overline{W}(\daleth_k, \dsr)
        &= \overline{W}(\mathcal{T}^\pi\daleth_{k-1}, \dsr) \\
        &= \overline{W}(\mathcal{T}^\pi\daleth_{k-1}, \mathcal{T}^\pi\dsr) \\
        &\leq \gamma\overline{W}(\daleth_{k-1}, \dsr) \\
    \end{align*}
    
    where the final step leverages the contraction provided by Proposition \ref{prop:dsm:contraction}. Then, repeating $k-1$ times, we have
    
    \begin{align*}
        \overline{W}(\daleth_k, \dsr)
        &\leq \gamma^k\overline{W}(\daleth_0, \dsr) \\
        &\leq \gamma^kC
    \end{align*}
    
    Since $|\gamma|<1$ and $C$ is finite, it follows that $\overline{W}(\daleth_k, \dsr)\to 0$,
    and since $\overline{W}$ is a metric, $\daleth_k\to\dsr$ in $\overline{W}$.
\end{proof}

\section{Further Discussion and Extensions}\label{sec:further-discussion}
\subsection{Examples of \AbsObjs in Finite-State-Space Environments}

In this section, we include several examples to illustrate the breadth of distributions on the simplex that can be obtained as \absobjs for simple environments.

Figure~\ref{fig:3-state-example} illustrates a kernel density approximation to the \absobj in a three-state MDP, with state-transition kernel given by
\begin{align*}
    \begin{pmatrix}
    0.5 & 0.5 & 0 \\
    0 & 0 & 1 \\
    1/3 & 1/3 & 1.3
    \end{pmatrix} \, ,
\end{align*}
and a discount factor of $\gamma = 0.7$. The figure is specifically created by generating 1,000 trajectories of length 100, which are then converted into visitation distributions, serving as approximate samples of the \absobj, and a kernel density estimator (KDE) is then fitted; we use Seaborn's \texttt{kdeplot} method with default parameters \citep{waskom2021seaborn}.
Also included in the figure are corresponding return distribution estimates, obtained by using the identity in \Eqref{eq:zero-shot-return-distribution} with the generated samples described above, and again using a KDE plot of the resulting return distribution estimator. Observe that since the second state transitions deterministically into the third state, the \absobj for the second state is a scaling and translation of the \absobj of the third state, as predicted by the \absobj Bellman equation in \Eqref{eq:randocc-bellman-equation}.

\begin{figure}[h]
    \vspace{-0.7em}
    \begin{center}
        \begin{tabular}{c c c}
            \includegraphics[keepaspectratio,width=.3\textwidth]{figures/example/state0.pdf} &
            \includegraphics[keepaspectratio,width=.3\textwidth]{figures/example/state1.pdf} &
            \includegraphics[keepaspectratio,width=.3\textwidth]{figures/example/state2.pdf}\\
            \hspace{-0.8cm}\includegraphics[keepaspectratio,width=.3\textwidth]{figures/example/return0.pdf} &
            \hspace{-0.8cm}\includegraphics[keepaspectratio,width=.3\textwidth]{figures/example/return1.pdf} &
            \hspace{-0.8cm}\includegraphics[keepaspectratio,width=.3\textwidth]{figures/example/return2.pdf}
        \end{tabular}
    \end{center}

    \caption{\textbf{Top:} Kernel density estimate of \absobj. Red dot represents the standard SR. \textbf{Bottom:} Kernel density estimates of return distributions, obtained via \absobj. Vertical lines represent expected return, obtained from standard SR.}
    \label{fig:3-state-example}
    
\end{figure}

\begin{figure}[h]
    \centering
    \null
    \hfill
    \includegraphics[keepaspectratio,width=.3\textwidth]{figures/example/sierpinski0.png}
    \hfill
    \includegraphics[keepaspectratio,width=.3\textwidth]{figures/example/sierpinski1.png}
    \hfill
    \includegraphics[keepaspectratio,width=.3\textwidth]{figures/example/sierpinski2.png}
    \hfill
    \null
    \caption{Monte Carlo estimation of the \absobj at states $x_0$, $x_1$, and $x_2$, in a three-state MDP. Each distribution is supported on a copy of the fractal Sierpi\'nski triangle. Red dot represents the standard SR.}
    \label{fig:gasket}
\end{figure}

In Figure~\ref{fig:gasket}, we plot a Monte Carlo approximation to the \absobj in a three-state environment in which there is an equal probability of jumping to each state in every transition, and the discount factor is $\gamma = 0.5$. The distributions over the simplex in this case are instances of the Sierpi\'nski triangle, a fractal distribution that is neither discrete nor absolutely continuous with respect to Lebesgue measure on the simplex. This can be viewed as a higher-dimensional analogue of the Cantor distribution described in the context of distributional reinforcement learning in \citet[Example~2.11]{bdr2022}.
These plots were generated using 10,000 samples per state, with an episode length of 100.

\subsection{Stochastic reward functions}

In the main paper, we make a running assumption that the rewards encountered at each state are given by a deterministic assumption. In full generality, Markov decision processes allow for the state-conditioned reward to follow a non-trivial probability distribution. In this section, we briefly describe the main issue with extending our approach to dealing with stochastic rewards.

The issue stems from the fact that the mapping from sequences of state $(X_k)_{k \geq 0}$ to the corresponding occupancy distribution $\sum_{k=0}^\infty \gamma^k \delta_{X_k}$ is often not injective. To see why, consider an environment with four states $x_0, x_1, x_2, x_3$ (including a terminal state $x_3$, which always transitions to itself). Consider two state sequences:
\begin{align*}
    &(x_0, x_1, x_2, x_2, x_3, x_3, \ldots) \, , \\
    &(x_0, x_2, x_1, x_1, x_3, x_3, \ldots) \, .
\end{align*}
These sequences give rise to the visitation distributions
\begin{align*}
    &(1-\gamma) \delta_{x_0} + (1-\gamma) \gamma \delta_{x_1} + (1-\gamma) (\gamma^2 + \gamma^3) \delta_{x_2} + \gamma^4 \delta_{x_3} \, , \\
    & (1-\gamma) \delta_{x_0} + (1-\gamma) (\gamma^2 + \gamma^3) \delta_{x_1} + (1-\gamma) \gamma \delta_{x_2} + \gamma^4 \delta_{x_3} \, .
\end{align*}
Now suppose $\gamma = \gamma^2 + \gamma^3$; clearly there is a value of $\gamma \in (0,1)$ satisfying this equation. But for this value of $\gamma$, the two visitation distributions above are identical. In the case of deterministic state-conditioned rewards, the two corresponding returns are also identical in this case. However, in the case of non-deterministic returns, the corresponding distributions over return are distinct. To give a concrete case, consider the setting in which all rewards are deterministically 0, except at state $x_1$, where they are given by the N$(0,1)$ distribution. Then under the first visitation distribution, the corresponding return distribution is the distribution of $\gamma Z$ (where $Z \sim \text{N}(0, 1)$), which has distribution N$(0, \gamma^2)$. In contrast, the return distribution for the second visitation distribution is the distribution of $\gamma^2 Z + \gamma^3 Z'$ (where $Z, Z' \overset{\mathrm{i.i.d.}}{\sim} N(0, 1)$), which has distribution $N(0, \gamma^4 + \gamma^6)$. However, $\gamma^2 \not= \gamma^4 + \gamma^6$, and hence these distributions are not equal.

These observations mean that the framework \emph{can} be extended to handle stochastic rewards in cycle-less environments; that is, environments where each state can be visited at most once in a given trajectory. This incorporates the important class of finite-horizon environments.

\subsection{The successor measure as a linear operator}\label{sec:app:linear-operator}

Here, we recall a key notion from \citet{blier2021learning} used in several proofs that follow. 
Successor measures act naturally as linear operators on the space $\boundedfn{\mathcal{X}}$ of
bounded measurable functions, much in the same way as Markov kernels act as linear operators
(see e.g. \citet{le2016brownian}). Particularly, for any $f\in\boundedfn{\mathcal{X}}$, we write
\begin{equation}\label{eq:sm:operator}
(\Psi^\pi f)(x) = \int_{\mathcal{X}}f(x')\Psi^\pi(\mathrm{d}x'\mid x) \, ,
\end{equation}
noting that $\Psi^\pi(\cdot\mid x)$ is a (probability) measure for each $x\in\mathcal{X}$. Through
this linear operation, the successor measure transforms reward functions $r:\mathcal{X}\to\R$ to value functions $V^\pi_r$,
\begin{align*}
(1-\gamma)V^\pi_r(x) &= \expectation{\pi}{\sum_{t\geq 0}(1-\gamma)\gamma^tr(X_t)\ \mid X_0=x}\\
&= \expectation{X'\sim\Psi^\pi(\cdot\mid x)}{r(X_t)}\\
&= (\Psi^\pi r)(x) \, .
\end{align*}

\section{Experimental Details}\label{sec:experiments}

In this section, we provide additional details relating to the experiments in the main paper.

\subsection{Baselines}\label{appendix:experiments:baselines}
To avoid confounders in our comparative analysis, our baselines were built with largely the same neural architecture and loss
as the $\delta$-models that we train.

\begin{paragraph}{$\gamma$-Model Ensemble}
An ensemble of $\gamma$ models is structurally equivalent to a $\delta$-model: each member of the ensemble is an equally-weighted model atom. Thus, we train the $\gamma$-model ensemble in the same way as the $\delta$-model, but we substitute the model MMD loss with an sum over state MMD losses corresponding to the model atoms.
\end{paragraph}
\begin{paragraph}{Transition Kernel}
In our experiments involving the learned transition kernel $P^\pi$, we again inherit the architecture from the $\delta$-model.
Note that $P^\pi$ is effectively equivalent to a $\delta$-model trained with $\gamma=0$ and with one model atom. One small
adjustment is necessary: a $\delta$-model with $\gamma=0$ will model the distribution over the source state (that is, each model atom will represent a Dirac at the source state) as defined in \Eqref{eq:randocc-bellman-equation}. To account for this, we
can simply shift indices of target states by one timestep, in order to predict the distribution over next states.

\end{paragraph}

\subsection{Hyperparameters}\label{appendix:experiments:hparams}

Unless otherwise specified the default hyperparameters used for our implementation of \algobj are outlined in Table~\ref{table:hparams}. Certain environment specific hyperparameters can be found in Appendix~\ref{sec:experiments:environments}.

\begin{table}
    \centering
    \caption{Default hyperparameters for \algobj.}
    \label{table:hparams}
    {\renewcommand{\arraystretch}{1.1}
    \begin{tabular}{l l}
    \hline
    Hyperparameter & Value \\\hline
    Generator Network & MLP(3-layers, 256 units, ReLU) \\
    Generator Optimizer & Adam($\beta_1 = 0.9$, $\beta_2 = 0.999$) \\
    Generator Learning Rate & $6.25e-5$ \\
    Discriminator Network & iResMLP(2 layers $\times$ 2 blocks, 256 units, ReLU) \\
    Discriminator Optimizer & Adam($\beta_1 = 0.9$, $\beta_2 = 0.999$) \\
    Discriminator Learning Rate & $6.25e-5$ \\
    Discriminator Feature Dimensionality & $8$ output features \\\hline
    Model Kernel & InverseMultiQuadric \\
    Adaptive Model Kernel (Median Heuristic) & True \\
    State Kernel & RationalQuadricKernel$\left(\mathcal{A} = \{ 0.2, 0.5, 1.0, 2.0, 5.0 \}\right)$ \\
    Adaptive State Kernel (Adversarial Kernel) & True \\\hline
    Horizon ($n$-step) & $5$ \\
    Discount Factor ($\gamma$) & $0.95$ \\
    Batch Size & $32$ \\
    Number of State Samples & $32$ \\
    Number of Model Samples & $51$ \\
    Target Parameter Step Size ($\lambda$) & $0.01$ \\
    Noise Distribution & $\omega \in \mathbb{R}^{8} \sim \mathcal{N}(0, I)$ \\
    Number of Gradient Updates & $3e6$\\\hline
    \end{tabular}
    }
\end{table}

\subsection{Environment Details}\label{sec:experiments:environments}

Below we provide specifics of the environments utilized for the experimental results in the paper.

\subsubsection{Windy Gridworld}\label{sec:experiments:windy}
When training a \algobj for the Windy Gridworld experiments, we use 4 model atoms and train for
1 million gradient steps.

Our experiments in Section \ref{sec:deep-dsr} involve two reward functions, namely \texttt{Hopscotch}
and \texttt{Lopsided Checkerboard}. These reward functions have constant rewards in each quadrant,
as shown in Figure \ref{fig:windy:rewards}.

\begin{figure}[h]
    \centering
    \begin{minipage}{0.49\linewidth}
    \centering
    \texttt{Lopsided Checkerboard}\\
    \begin{tabular}{|c|c|}
    \hline
    15 & -10\\\hline
    -2 & 2\\\hline
    \end{tabular}
    \end{minipage}
    \begin{minipage}{0.49\linewidth}
    \centering
    \texttt{Hopscotch}\\
    \begin{tabular}{|c|c|}
    \hline
    3 & -1\\\hline
    -2 & 2\\\hline
    \end{tabular}
    \end{minipage}
    \caption{Reward functions for Windy Gridworld.}
    \label{fig:windy:rewards}
\end{figure}

Moreover, we provide some additional visualizations on predicted return distributions from our
\absobj implementation in Figure \ref{fig:windy:retdist}.

\begin{figure}
\centering
\includegraphics[width=0.7\linewidth]{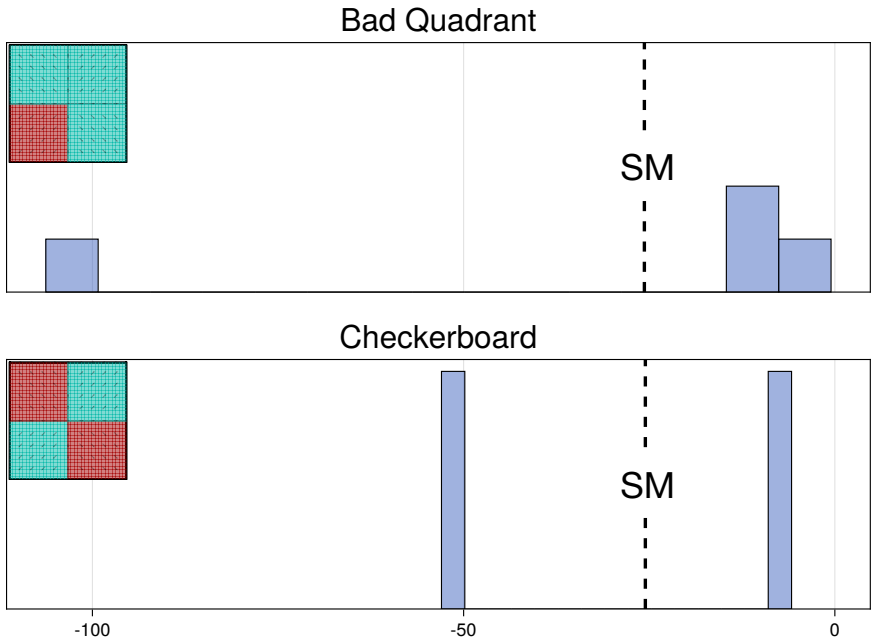}
\caption{Return distribution predictions in Windy Gridworld under the uniform random policy where the source state is the origin. Each row represents a separate reward function depicted by the inset grids, with
red regions denoting negative reward.}\label{fig:windy:retdist}
\end{figure}

Notably, Figure \ref{fig:windy:retdist} demonstrates that the \absobj correctly predict the
fraction of futures that enter the red region, which demonstrates that the \absobj is can
detect when a policy will be likely to violate novel constraints.
\subsubsection{Pendulum}\label{sec:experiments:pendulum}
When training a \algobj for the Pendulum experiments, we use 51 model atoms and train for 3 million
gradient steps.

Our experiments on the Pendulum environment involve zero-shot policy evaluation for rewards that are held out during training.
We considered four reward functions, namely \texttt{Default}, \texttt{Above Horizon}, \texttt{Stay Left}, and \texttt{Counterclockwise Penalty},
which we describe below.

All reward functions are defined in terms of the pendulum angle $\theta\in[-\pi,\pi]$, its angular velocity $\dot\theta$, and the action $a\in\R$. The reward functions are given by

\begin{align*}
    r_{\texttt{Default}}(\theta, \dot\theta, a) &= -\left(\theta^2 + 0.1\dot\theta^2 + 0.001a^2\right)\\
    r_{\texttt{Above Horizon}}(\theta, \dot\theta, a) &= -(\indicator{\theta\geq\pi/2} + 0.1a^2)\\
    r_{\texttt{Stay Left}}(\theta, \dot\theta, a) &= \min(0,\sin\theta)\\
    r_{\texttt{CCW Penalty}}(\theta, \dot\theta, a) &= \indicator{\dot\theta < 0}
\end{align*}

These reward functions (aside from \texttt{Default}) were chosen to model potential constraints
that can be imposed on the system after a learning phase. The \texttt{Above Horizon} reward
imposes extra penalty whenever the pendulum is below the horizon, which may model the presence of an obstacle under the horizon. The \texttt{Stay Left} reward
reinforces the system when the pendulum points further to the left, which could, for instance,
indicate a different desired target for the pendulum. Finally, the \texttt{Counterclockwise Penalty}
($r_{\texttt{CCW Penalty}}$ above) reinforces the system for rotating clockwise, which can model
a constraint on the motor.

\section{Ablation Experiments}
\begin{figure}
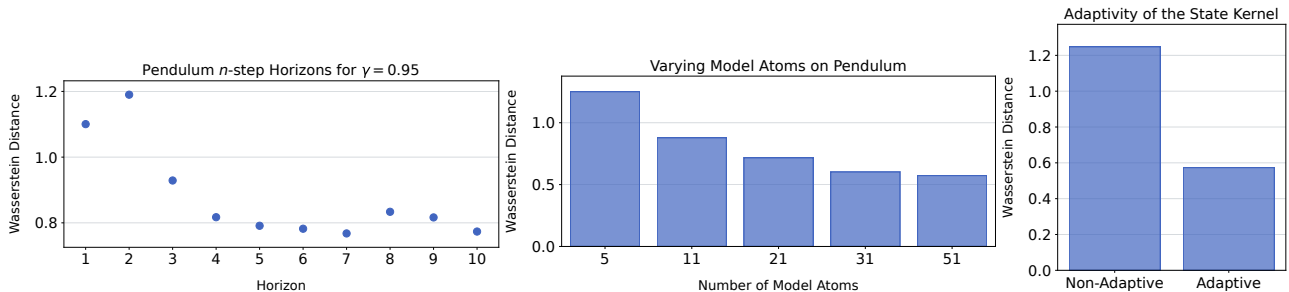

\includegraphics[width=0.38\textwidth]{figures/pendulum-n-step.pdf}
\includegraphics[width=0.38\textwidth]{figures/pendulum-num-outer.pdf}
\includegraphics[width=0.22\textwidth]{figures/adaptive-kernel-ablation.pdf}
\caption{Each subplot indicates the Wasserstein distance of the DSM versus the empirical MC return distributions on $9$ source states for $4$ reward functions on Pendulum. \textbf{Left:} Varying the value of $n$ in the \algobj multi-step bootstrapped target. \textbf{Middle:} varying the number of model atoms. \textbf{Right:} Selectively applying an adaptive (adversarial) or non-adaptive state kernel.}
\label{fig:ablations}
\end{figure}

As mentioned in Section~\ref{sec:deep-dsr} there are several practical considerations when learning \algobjs. We further expand on three crucial details: $n$-step bootstrapping, adaptive kernels, and how the number of model atoms affects our approximation error.

\textbf{$n$-step bootstrapping.}
In order to allow \algobjs to learn longer horizons than typically possible with $\gamma$-models \citep{janner2020gamma} we employ the use of $n$-step bootstrapping when constructing our target distribution.
The choice of $n$ is critical; if $n$ is too small, the update is over-reliant on bootstrapping, leading to instability. Conversely, for large $n$ it becomes impractical to store long sequences.

For these reasons it is worth understanding how \algobjs interacts with the value of $n$, which can help guide the selection of this parameter for any type of geometric horizon model \citep[e.g.,][]{janner2020gamma, thakoor2022generalised}. To this end, we perform a sweep over $n \in \{1, 2, \dots, 10\}$ to understand how this affects the Wasserstein distance between the DSM-estimated return compared to the empirical MC return distribution. We train a \algobj with $\gamma = 0.95$ for each $n$ on the Pendulum environment with all other hyperparameters remaining fixed as in Appendix~\ref{appendix:experiments:hparams}. Figure~\ref{fig:ablations} (left) shows the Wasserstein distance averaged over $9$ source states for the four reward functions outlined in Appendix~\ref{sec:experiments:pendulum}.

We can see that $n$-step bootstrapping does indeed help us to learn better approximations until around $n = 5$ where the benefits are less clear. This corresponding to the bootstrapped term accounting for $\approx 80\%$ of the probability mass of the target distribution.

\textbf{Adversarial kernel.}
Given the non-stationary of our target distributions, we found it crucial to employ an adaptive kernel in the form of an adversarial kernel \citep{mmdgan} for the state kernel. Note that the model kernel is itself a function of the state kernel so by learning an adversarial state kernel we are able to adapt our model kernel as well.

To validate our decision to employ an adversarial kernel we train a \algobj with and without an adaptive kernel. The adaptive kernel is the one described in Appendix~\ref{appendix:experiments:hparams}.
The non-adaptive kernel omits the application of the learned embedding network, that is, the kernel is a mixture of rational quadric kernels,
$$
\rho(d) = \sum_{\alpha \in \mathcal{A}} \left( 1 + \frac{d}{2\alpha} \right)^{-\alpha} \, ,
$$
for $\mathcal{A} = \{ 0.2, 0.5, 1.0, 2.0, 5.0 \}$ as per \citet{binkowski18demystifying}. Figure~\ref{fig:ablations} (right) shows that the Wasserstein distance is nearly halved when applying the adversarial kernel.

\textbf{Number of model atoms.}
As the number of model atoms increases we expect to better approximate $\dsr$. To get an idea of how our approximation is improving as we scale the number of atoms we compare the Wasserstein distance from our \algobj to the empirical MC return distributions for $\{5, 11, 21, 31, 41, 51 \}$ model atoms. These results are presented in Figure~\ref{fig:ablations} (Middle). As expected we obtain a better approximation to $\dsr$ when increasing the number of model atoms. Further scaling the number of model atoms should continue to improve performance at the cost of compute. This is a desirable property for risk-sensitive applications where better approximations are required.
\label{appendix:experiments:ablation}

\section{Additional Results}\label{sec:additional_results}
\begin{restatable}{proposition}{propSRDeterminesDSR}\label{prop:SRDeterminesDSR}
    The \absobj is determined by the standard SR. In other words, given $\Psi^\pi$, one can mathematically derive $\daleth^\pi$.
\end{restatable}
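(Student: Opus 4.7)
The plan is a two-step reduction: first, show that $\Psi^\pi$ uniquely determines the state-transition kernel $\statetransitionkernel$; second, note that $\statetransitionkernel$ together with an initial state then determines the full law of the state trajectory, and hence of $\dsr$.

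For the first step, I would work in the operator framework of Appendix~\ref{sec:app:linear-operator}, viewing $\statetransitionkernel$ and $\Psi^\pi$ as bounded linear operators on $\boundedfn{\mathcal{X}}$ (each with operator norm at most $1$, since both $\statetransitionkernel(\cdot\mid x)$ and $\Psi^\pi(\cdot\mid x)$ are probability measures for every $x$). From \Eqref{eq:sr} one reads off the Neumann identity
$$\Psi^\pi \;=\; (1-\gamma)\sum_{t=0}^\infty \gamma^t (\statetransitionkernel)^t \;=\; (1-\gamma)\bigl(I - \gamma\,\statetransitionkernel\bigr)^{-1},$$
where the series converges absolutely in operator norm because $\gamma < 1$. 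In particular $(I - \gamma\,\statetransitionkernel)$ is invertible, so is $\Psi^\pi$, and rearranging gives an explicit formula
$$\statetransitionkernel \;=\; \gamma^{-1}\bigl(I - (1-\gamma)(\Psi^\pi)^{-1}\bigr),$$
exhibiting $\statetransitionkernel$ as a function of $\Psi^\pi$ alone.

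For the second step, once $\statetransitionkernel$ is known, the Ionescu--Tulcea theorem determines a unique joint law for the Markov trajectory $(X_t)_{t\geq 0}$ on $\mathcal{X}^{\mathbb{N}}$ with $X_0 = x$. By Definition~\ref{def:randocc}, $M^\pi(\cdot\mid x) = \sum_{t\geq 0}(1-\gamma)\gamma^t \delta_{X_t}$ is a measurable functional of that trajectory into the Polish space $\probset{\mathcal{X}}$ under the weak topology, so its pushforward law $\dsr(x) = \law{M^\pi(\cdot\mid x)}$ is also uniquely determined. Composing the two steps completes the argument.

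The main obstacle is technical rather than conceptual, and sits in step~1: one must specify a function space on which the inverse $(\Psi^\pi)^{-1}$ is genuinely well-defined and on which the recovered operator corresponds back to a bona fide Markov kernel. Choosing $\boundedfn{\mathcal{X}}$ with the supremum norm suffices, since the Neumann series converges there and $(1-\gamma)^{-1}(I - \gamma\,\statetransitionkernel)$ is then a bounded two-sided inverse of $\Psi^\pi$; the resulting operator $\statetransitionkernel$ agrees by construction with the Markov kernel in the Bellman equation $\Psi^\pi(\cdot\mid x) = (1-\gamma)\delta_x + \gamma\!\int \Psi^\pi(\cdot\mid x')\,\statetransitionkernel(\mathrm{d}x'\mid x)$. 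A coordinate-free alternative that sidesteps inversion altogether is to rewrite this Bellman equation as $\gamma\,\statetransitionkernel\,\Psi^\pi = \Psi^\pi - (1-\gamma)I$ and observe that right-composition with the invertible $\Psi^\pi$ is injective, so the equation admits at most one solution in $\statetransitionkernel$.
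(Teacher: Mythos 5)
Your proposal is correct and follows essentially the same route as the paper: both recover $\statetransitionkernel$ from $\Psi^\pi$ by viewing them as linear operators on $\boundedfn{\mathcal{X}}$, summing the Neumann series to get $\Psi^\pi = (1-\gamma)(\identity - \gamma \statetransitionkernel)^{-1}$, and inverting to obtain $\statetransitionkernel = \gamma^{-1}(\identity - (1-\gamma)(\Psi^\pi)^{-1})$, after which the \absobj is determined since it is a functional of the trajectory law induced by $\statetransitionkernel$. Your second step (Ionescu--Tulcea plus measurability of the occupancy functional) is actually spelled out more explicitly than in the paper, which leaves that part as an informal remark.
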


\begin{proof}
    To establish Proposition \ref{prop:SRDeterminesDSR}, it suffices to show that the one-step transition kernel
    $\statetransitionkernel$ for a given policy $\pi$ can be recovered exactly from $\Psi^\pi$. This is because
    $\statetransitionkernel$ contains all possible structural information about the environment and the policy's
    dynamics, so it contains all information necessary to construct the \absobj. When $\mathcal{X}$ is finite,
    Lemma \ref{lem:SRDeterminesKernel:tabular} shows that $\Psi^\pi$ encode $\statetransitionkernel$, and Lemma
    \ref{lem:SRDeterminesKernel:continuous} demonstrates this for the more general class of state space $\mathcal{X}$ considered in this paper.
\end{proof}

\begin{lemma}\label{lem:SRDeterminesKernel:tabular}
Let $\mathcal{X}$ be finite, and let $\Psi^\pi$ denote the successor representation for a given policy $\pi$.
Then $\statetransitionkernel$ can be recovered exactly from $\Psi^\pi$.
\end{lemma}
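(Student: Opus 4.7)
The plan is to exploit the well-known linear-algebraic identity relating the successor representation to the one-step transition kernel in the tabular setting, and then simply invert it. Since $\mathcal{X}$ is finite, I can view $\statetransitionkernel$ as an $|\mathcal{X}|\times|\mathcal{X}|$ row-stochastic matrix and $\Psi^\pi$ as an $|\mathcal{X}|\times|\mathcal{X}|$ matrix whose $(x,y)$ entry is $\Psi^\pi(\{y\}\mid x)$. Using \Eqref{eq:sr}, I can write
\[
\Psi^\pi = (1-\gamma)\sum_{t=0}^\infty \gamma^t (\statetransitionkernel)^t = (1-\gamma)(I - \gamma \statetransitionkernel)^{-1},
\]
where the last equality uses the Neumann series, which converges because the spectral radius of $\gamma \statetransitionkernel$ is at most $\gamma<1$ (so $I-\gamma\statetransitionkernel$ is invertible).

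From this representation, I can solve algebraically for $\statetransitionkernel$. First I would observe that $\Psi^\pi$ itself is invertible, since $(1-\gamma)(I-\gamma\statetransitionkernel)^{-1}$ is a product of invertible matrices. Then rearranging gives $(1-\gamma)(\Psi^\pi)^{-1} = I - \gamma \statetransitionkernel$, whence
\[
\statetransitionkernel = \frac{1}{\gamma}\left[I - (1-\gamma)(\Psi^\pi)^{-1}\right],
\]
provided $\gamma>0$. Because the right-hand side depends only on $\gamma$ and $\Psi^\pi$, this formula explicitly recovers $\statetransitionkernel$ from $\Psi^\pi$, which is all that is claimed.

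The only technical point worth flagging is the edge case $\gamma=0$: here $\Psi^\pi=I$ carries no information about $\statetransitionkernel$, so the statement implicitly requires $\gamma\in(0,1)$, consistent with the setting of the paper where $\Psi^\pi$ is used for long-horizon predictions. Apart from this observation, there is no real obstacle: the argument is entirely linear-algebraic, relying only on the invertibility of $I-\gamma\statetransitionkernel$ and a direct rearrangement. The bulk of the work in the companion lemma (\ref{lem:SRDeterminesKernel:continuous}, stated for general Polish $\mathcal{X}$) is where the measure-theoretic difficulties will live; in the finite case, matrix inversion suffices.
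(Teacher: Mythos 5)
Your proof is correct and follows essentially the same route as the paper's: both identify $\Psi^\pi = (1-\gamma)(I-\gamma \statetransitionkernel)^{-1}$ in matrix form and rearrange to obtain $\statetransitionkernel = \gamma^{-1}\left[I - (1-\gamma)(\Psi^\pi)^{-1}\right]$. Your added justification of the Neumann series convergence and the remark on the degenerate case $\gamma=0$ are sensible refinements of details the paper leaves implicit.
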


\begin{proof}
    Consider a policy $\pi : \mathcal{X} \rightarrow \mathscr{P}(\mathcal{A})$ with discounted visitation distributions $\Psi^\pi$. We consider the state transition matrix $P^\pi\in\R^{|\mathcal{X}|\times|\mathcal{X}|}$ where $P^\pi_{x,x'} = \statetransitionkernel(x'\mid x)$. Recall that $\Psi^\pi = (1-\gamma)(I - \gamma P^\pi)^{-1}$, so rearranging we have $P^\pi = \gamma^{-1}(I - (1-\gamma)(\Psi^\pi)^{-1})$. Therefore the one-step state-to-state transition probabilities are determined by $\Psi^\pi$, and since $\daleth^\pi$ is a function of the one-step transition probabilities, the conclusion follows.
\end{proof}

\begin{lemma}\label{lem:SRDeterminesKernel:continuous}
Let $\mathcal{X}$ be a complete, separable metric space endowed with its Borel $\sigma$-field $\Sigma$, and let $\Psi^\pi$ denote the successor measure for a given policy
$\pi$. Then $\Psi^\pi$ encodes $\statetransitionkernel$, in the sense that $\statetransitionkernel$ can be
expressed as a function of $\Psi^\pi$ alone.
\end{lemma}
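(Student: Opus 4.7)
The plan is to follow the tabular argument of Lemma~\ref{lem:SRDeterminesKernel:tabular}, but replace matrix algebra with bounded operator theory on $B(\mathcal{X})$, the space of bounded measurable functions on $\mathcal{X}$ equipped with the supremum norm. As recalled in Appendix~\ref{sec:app:linear-operator}, both $\Psi^\pi$ and $\statetransitionkernel$ act as bounded linear operators on $B(\mathcal{X})$ via integration against the corresponding kernel, and by splitting the geometric series defining $\Psi^\pi$ at $t=0$ and applying the Markov property at $t=1$, one obtains the one-step recursion
\[
\Psi^\pi = (1-\gamma)I + \gamma P^\pi \Psi^\pi.
\]

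Next I would establish that $\Psi^\pi$ is boundedly invertible as an operator on $B(\mathcal{X})$. Since $P^\pi$ is a Markov operator with $\|P^\pi\|_{\mathrm{op}}\le 1$, the Neumann series $\sum_{t\ge 0}\gamma^t (P^\pi)^t$ converges in operator norm to $(I-\gamma P^\pi)^{-1}$, and the defining sum of $\Psi^\pi$ coincides with $(1-\gamma)$ times this inverse; consequently $(\Psi^\pi)^{-1} = (1-\gamma)^{-1}(I-\gamma P^\pi)$ as bounded operators. Rearranging the Bellman identity above then yields the explicit formula
\[
P^\pi = \gamma^{-1}\bigl(I - (1-\gamma)(\Psi^\pi)^{-1}\bigr),
\]
which expresses the operator $P^\pi$ as a function of $\Psi^\pi$ alone. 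To pass from operators back to kernels, I would apply both sides to indicator functions: for every $x\in\mathcal{X}$ and every Borel set $S$,
\[
\statetransitionkernel(S\mid x) = (P^\pi \mathbf{1}_S)(x) = \gamma^{-1}\bigl(\mathbf{1}_S(x) - (1-\gamma)\bigl((\Psi^\pi)^{-1}\mathbf{1}_S\bigr)(x)\bigr),
\]
so $\statetransitionkernel(\cdot\mid x)$ is determined pointwise on the Borel $\sigma$-field of $\mathcal{X}$ in terms of $\Psi^\pi$.

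The main subtlety I anticipate is that the Neumann-series representation of $(\Psi^\pi)^{-1}$ secretly invokes the very $P^\pi$ that we are attempting to recover, giving the argument an appearance of circularity. A careful exposition should therefore frame the key claim as a \emph{uniqueness} statement: if two transition kernels $P_1,P_2$ both reproduce the same successor measure $\Psi^\pi$, then the operator Bellman identity forces $(P_1-P_2)\Psi^\pi f = 0$ for every $f\in B(\mathcal{X})$, and the surjectivity of $\Psi^\pi$ on $B(\mathcal{X})$ (established from the Neumann-series identity for the underlying $P^\pi$, which is a property of $\Psi^\pi$ even if the series depends on the unknown $P^\pi$) forces $P_1 = P_2$ as operators. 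Evaluating at indicators $\mathbf{1}_S$ for Borel sets $S$ and using the Polishness of $\mathcal{X}$ to guarantee regularity of the associated probability measures, this operator equality transfers to the equality of the corresponding Markov kernels $\statetransitionkernel_1 = \statetransitionkernel_2$, completing the proof.
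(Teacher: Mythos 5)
Your proposal is correct and follows essentially the same route as the paper's proof: both view $\Psi^\pi$ and $P^\pi$ as bounded linear operators on $\boundedfn{\mathcal{X}}$, use the Neumann series to show $\Psi^\pi = (1-\gamma)(\identity-\gamma P^\pi)^{-1}$ is invertible, rearrange to get $P^\pi = \gamma^{-1}(\identity-(1-\gamma)(\Psi^\pi)^{-1})$, and evaluate on indicator functions to recover the kernel. Your added uniqueness framing to dispel the apparent circularity is a sound (if not strictly necessary) refinement, since the inverse operator is determined by $\Psi^\pi$ itself regardless of how its existence is established.
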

\begin{proof}
Recall the definition of the successor measure $\Psi^\pi:\Sigma\to\R^+$,
\begin{align*}
    \Psi^\pi(A\mid x) &= (1-\gamma)\sum_{t\geq 0}\gamma^t\Pr(X_t\in A\mid X_0=x) \, .
\end{align*}
As shown above in Appendix \ref{sec:app:linear-operator}, $\Psi^\pi$ acts as a linear operator on $\boundedfn{\mathcal{X}}$ according to $(\Psi^\pi f)(x) = \expectation{X'\sim\Psi^\pi(\cdot\mid x)}{f(X')}$.
We denote by $P^\pi:\boundedfn{\mathcal{X}}\to\boundedfn{\mathcal{X}}$ the Markov kernel corresponding
to $\statetransitionkernel$, where $\boundedfn{\mathcal{X}}$ denotes the space of bounded and measurable
functions on $\mathcal{X}$. The operator $P^\pi$ acts on a function $f\in\boundedfn{\mathcal{X}}$ according to
\begin{align*}
    (P^\pi f)(x) &= \int_{\mathcal{X}}f(x')\statetransitionkernel(\mathrm{d}x'\mid x) = \expectation{X'\sim \statetransitionkernel(\cdot\mid x)}{f(X')} \, .
\end{align*}
That is, $(P^\pi f)(x)$ computes the expected value of $f$ over the distribution of next states, conditioned
on a starting state. Returning to the definition of the successor measure, for any $f\in\boundedfn{\mathcal{X}}$, we have
\begin{align*}
    (\Psi^\pi f)(x)
    &= \int_{\mathcal{X}}f(x')\Psi^\pi(\mathrm{d}x'\mid x)\\
    &= (1-\gamma)\int_{\mathcal{X}}f(x')\sum_{t\geq 0}\gamma^t(\statetransitionkernel)^t(\mathrm{d}x'\mid x)\\
    &= (1-\gamma)\sum_{t\geq 0}\gamma^t\int_{\mathcal{X}}f(x')(\statetransitionkernel)^t(\mathrm{d}x'\mid x)\\
    &= (1-\gamma)\sum_{t\geq 0}\gamma^t((P^\pi)^tf)(x)
\end{align*}
where the third step invokes Fubini's theorem, given the boundedness of $f$ and $\statetransitionkernel$. We have shown that
\begin{align*}
    \Psi^\pi = (1-\gamma)\sum_{t\geq 0}\gamma^t(P^\pi)^t \, ,
\end{align*}
where the correspondence is with respect to the interpretation of $\Psi^\pi$ as a linear operator on $\boundedfn{\mathcal{X}}$. \citet[Theorem 2]{blier2021learning} show that $\sum_{t\geq 0}\gamma^t(P^\pi)^t = (\identity - \gamma P^\pi)^{-1}$ as linear operators on $\boundedfn{\mathcal{X}}$, where $\identity$ is the identity map on
$\boundedfn{\mathcal{X}}$. As a consequence, $\Psi^\pi$ is proportional to the inverse of a linear operator, so it is itself an invertible linear operator, where
\begin{align*}
    (\Psi^\pi)^{-1} &= \frac{1}{1-\gamma}(\identity - \gamma P^\pi)
\end{align*}
and hence
\begin{align*}
    P^\pi &= \gamma^{-1}\left(\identity - (1-\gamma)(\Psi^\pi)^{-1}\right) \, .
\end{align*}
Again, the correspondence is established for $P^\pi$ as a linear operator on $\boundedfn{\mathcal{X}}$. However, we can now recover
the measures $\statetransitionkernel(\cdot\mid x)$ according to
\begin{align*}
    \statetransitionkernel(A\mid x) &= \int_A\statetransitionkernel(\mathrm{d}x'\mid x)\\
    &= \int_{\mathcal{X}}\characteristic{A}(x')\statetransitionkernel(\mathrm{d}x'\mid x) && (\characteristic{A}(y)\triangleq\indicator{y\in A})\\
    &= (P^\pi\characteristic{A})(x)\\
    &= \left(\gamma^{-1}(\identity - (1-\gamma)(\Psi^\pi)^{-1})\characteristic{A}\right)(x)
\end{align*}
where $\characteristic{A}\in\boundedfn{\mathcal{X}}$ for any measurable set $A$. Thus, we have shown that $\statetransitionkernel(\cdot\mid x)$
can be reconstructed from $\Psi^\pi$ alone, as claimed.
\end{proof}

\end{document}